\documentclass{article} %
\usepackage{arxiv_v1,times}

\usepackage{pgfplots}
\pgfplotsset{compat=1.18}
\usepgfplotslibrary{groupplots}

\usepackage{tikz}
\usepackage{comment}
\usetikzlibrary{positioning,arrows.meta,calc}

\definecolor{crghighfill}{HTML}{EEF7E8}
\definecolor{crghighborder}{HTML}{4B7F27}
\definecolor{crgmidfill}{HTML}{FFF5D6}
\definecolor{crgmidborder}{HTML}{A66A00}
\definecolor{crglowfill}{HTML}{FBE7E5}
\definecolor{crglowborder}{HTML}{A33A32}

\usepackage{fvextra}
\DefineVerbatimEnvironment{Prompt}{Verbatim}{
  breaklines=true,
  breakanywhere=true,
  fontsize=\scriptsize,
  frame=single,
  framesep=2mm
}

\usepackage{amsmath,amsfonts,bm}

\def\eqref#1{equation~\ref{#1}}

\def\1{\bm{1}}

\DeclareMathAlphabet{\mathsfit}{\encodingdefault}{\sfdefault}{m}{sl}
\SetMathAlphabet{\mathsfit}{bold}{\encodingdefault}{\sfdefault}{bx}{n}

\usepackage{url}

\usepackage{graphicx}
\usepackage{booktabs}
\usepackage{multirow}
\usepackage{colortbl}

\usepackage{comment}
\usepackage[colorlinks=true, citecolor=blue!50!black, linkcolor=blue!50!black]{hyperref}

\usepackage[normalem]{ulem}
\usepackage{xcolor}

\newif\ifshowremovals
\showremovalstrue

\newif\iffinalshort
\ifdefined\finalshort
  \finalshorttrue
\else
  \finalshortfalse
\fi
\finalshorttrue

\DeclareRobustCommand{\candidatesummaryremoval}[1]{%
  \iffinalshort\else
    \sout{#1}%
  \fi
}

\DeclareRobustCommand{\candidatesummaryadd}[1]{%
  \iffinalshort
    #1%
  \else
    \uline{#1}%
  \fi
}

\DeclareRobustCommand{\summaryremoval}[1]{%
}

\newcommand{\crg}{\mathcal{G}}
\newcommand{\claimset}{\mathcal{V}}
\newcommand{\childfn}{\mathrm{ch}}
\newcommand{\leafset}{\mathcal{L}}
\newcommand{\evidenceset}{\mathcal{X}}
\newcommand{\relfn}{\mathcal{R}}

\newcommand{\leafconf}{\psi}

\usepackage{amsthm}
\newtheorem{proposition}{Proposition}
\newtheorem{lemma}{Lemma}
\newtheorem{corollary}{Corollary}

\theoremstyle{remark}
\newtheorem{remark}{Remark}

\definecolor{gold}{rgb}{0.6, 0.4, 0}
\definecolor{darkred}{rgb}{0.5, 0.0, 0.0}
\definecolor{darkgreen}{rgb}{0.0, 0.5, 0.0}
\newcommand{\colordarkgreen}[1]{\textcolor{darkgreen}{#1}}
\newcommand{\colordarkgold}[1]{\textcolor{gold}{#1}}
\newcommand{\colordarkred}[1]{\textcolor{darkred}{#1}}

\newcommand{\brendan}[1]{\textcolor{violet}{[Brendan]: #1}}
\newcommand{\pouya}[1]{\textcolor{orange}{[Pouya]: #1}}
\newcommand{\estevam}[1]{\textcolor{team}{[Estevam]: #1}}
\newcommand{\farima}[1]{\textcolor{blue}{[Farima]: #1}}
\newcommand{\jf}[1]{\textcolor{gray}{[JF]: #1}}
\newcommand{\TODO}[1]{\textcolor{red}{[TODO]: #1}}
\newcommand{\claude}[1]{\textcolor{brown}{[Claude]: #1}}

\newcommand{\bdk}[1]{\textcolor{purple}{\bf\small [#1 --Brendan]}} 
\newif\ifshowauthorcomments
\showauthorcommentsfalse%

\ifshowauthorcomments\else
  \renewcommand{\bdk}[1]{}
  \renewcommand{\brendan}[1]{}
  \renewcommand{\pouya}[1]{}
  \renewcommand{\estevam}[1]{}
  \renewcommand{\farima}[1]{}
  \renewcommand{\jf}[1]{}
  \renewcommand{\TODO}[1]{}
  \renewcommand{\claude}[1]{}
\fi

\title{
Confidence Reasoning Graphs: Structured Confidence Estimation for LLM Agents}

\author{Brendan King\textsuperscript{1}\thanks{Work done while interning at Megagon Labs.},
Farima Fatahi Bayat\textsuperscript{2},  Jean-Flavien Bussotti\textsuperscript{2}, 
Pouya Pezeshkpour\textsuperscript{2}, \\ \textbf{Estevam Hruschka}\textsuperscript{2} \\
\normalfont
\textsuperscript{1}University of California, Santa Cruz, \textsuperscript{2}Megagon Labs\\
\texttt{bking2@ucsc.edu}, \texttt{\{farima,jflavien,pouya,estevam\}@megagon.ai}
}

\newcommand{\methodnamefull}[0]{Confidence Reasoning Graphs (CRGs)}
\newcommand{\methodnameshort}{CRG}

\definecolor{plotblue}{HTML}{0173B2}
\definecolor{plotorange}{HTML}{DE8F05}
\definecolor{plotgreen}{HTML}{029E73}
\newcommand{\plotours}[1]{\textcolor{plotblue}{#1}}
\newcommand{\plotverbalizer}[1]{\textcolor{plotorange}{#1}}
\newcommand{\plotsurrogate}[1]{\textcolor{plotgreen}{#1}}
\newcommand{\agenttraj}{\tau}
\newcommand{\trajspace}{\mathcal{T}}
\newcommand{\agentpolicy}{\pi}
\newcommand{\actionspace}{\mathcal{A}}
\newcommand{\obsspace}{\mathcal{O}}
\newcommand{\agentsuccess}{y}
\newcommand{\agentconf}{c}
\newcommand{\confestimator}{F_\theta}
\newcommand{\task}{q}          %
\newcommand{\env}{\Sigma}      %
\newcommand{\solution}{A}      %

\usepackage{amsthm}

\iclrfinalcopy %

\begin{document}

\maketitle

\begin{abstract}
When using an LLM agent in a consequential domain, making an informed decision about whether to trust its output or intervene requires calibrated confidence in the agent’s success.
Confidence estimation for agents is difficult because evidence about success is distributed across heterogeneous, interdependent steps of an agent's trajectory.
Practical agentic deployments introduce further challenges: frontier LLMs often provide limited access to internal signals, agent roll-outs are costly, and training data may be unavailable or quickly become outdated.
To address these challenges, we introduce \textbf{\methodnamefull}, an inference-time
framework that estimates the probability an agent accomplished its task from a
single trajectory, without privileged model access or training data.
Rather than compressing an execution into a single holistic judgment, a \methodnameshort{} begins with the claim that the agent accomplished its task, decomposes it into contextualized sub-claims grounded in trajectory evidence, estimates confidence for each terminal claim, and finally aggregates these into an overall confidence estimate.
Across three agentic benchmarks, three backbone models, and three agent frameworks, \methodnameshort{}s yield better-calibrated confidence and stronger risk-aware decision making than verbalized, sampling-based, and white-box surrogate baselines. 
We further find that calibration error alone can be misleading: a white-box surrogate baseline appears well calibrated while providing near-chance discrimination. 
Ablations attribute \methodnameshort{}’s improvements to claim-level confidence estimation and aggregation rather than graph construction alone.
Finally, a \methodnameshort{} exposes the claims and trajectory evidence underlying each confidence estimate, enabling it to be audited at decision time.\footnote{Code available at \href{https://github.com/megagonlabs/crg_ce}{https://github.com/megagonlabs/crg\_ce}}

\end{abstract}

\section{Introduction}
\label{sec:introduction}

Increasingly, consequential real-world tasks are being delegated to LLM agents: autonomous systems that combine LLM reasoning with tools, memory, and skills to solve complex, multi-step tasks through interaction with an environment.
Such agents are often used in domains where their decisions have real-world consequences, and users must correspondingly make inference-time decisions about whether to trust their output.
An agent's average success rate offers little guidance for this decision: it does not reveal whether a particular output is correct, and checking that output can be difficult because it results from a chain of decisions, tool calls, and observations in which one mistake can affect everything that follows.
What is needed is calibrated confidence for agents.

While confidence estimation for LLMs is well-studied \citep{geng_survey_2024}, the agentic setting poses new challenges.
Confidence in a single LLM output can often be estimated from sequence log-probabilities or from agreement across repeated samples \citep{wang2023selfconsistency}.
An agent trajectory, in contrast, is a heterogeneous trace that interleaves LLM-generated text, tool calls, skill, memory use, and observations from the environment.
Estimating confidence in task success therefore requires reasoning jointly over a long trajectory with different sources of uncertainty:
from the model, retrieval, tool failures, or environmental stochasticity \citep{liu-etal-2024-uncertainty, zhao-etal-2025-uncertainty}.

Real-world agent deployments further impose three constraints that make many existing confidence estimators for agents impractical.
First, many agents use black-box LLMs that do not expose internal states, token probabilities, or even LLM reasoning text, making estimators that rely on these privileged signals unusable.
Second, even a single agent run may be costly, making estimators requiring multiple roll-outs highly impractical.
Finally, training a separate estimator requires labeled trajectories, which are costly to collect and can become stale as agents, models, tools, and environments evolve.
Estimators reliant on token probabilities therefore cannot be applied to agents that use black-box LLMs \citep{zhao-etal-2025-uncertainty, zhang2026agentic, bouchard_beyond_2026}, and estimators based on agreement require an impractical number of runs for many real-world tasks \citep{duan_uprop_2025}.
While verbalized confidence estimators satisfy these constraints, the resulting estimates are often substantially overconfident \citep{kaddour_agentic_2026}.

To overcome these limitations, we introduce \textbf{\methodnamefull}, a confidence estimation method for agents that evaluates a single trajectory without requiring privileged model access, repeated roll-outs, or labeled training data. 
A \methodnameshort{} recursively decomposes the claim that \textit{the agent successfully accomplished its task} into a set of contextualized sub-claims that define success for the task, then estimates confidence in each leaf sub-claim grounded in evidence from the trajectory, before finally aggregating the resulting leaf confidences into a root confidence.
Together, these steps yield an overall confidence estimate and a structured argument supporting it.

We evaluate \methodnameshort{}s on trajectories produced by three strong agent LLMs across three challenging agent benchmarks: SWE-Bench Verified \citep{chowdhury_introducing_2024}, EnterpriseOps-Gym \citep{malay_enterpriseops-gym_2026}, and SkillsBench \citep{li_skillsbench_2026}.
\methodnameshort{}s achieve the lowest adaptive ECE and Brier score and the highest decision utility on every benchmark, outperforming both verbalized and surrogate-probability baselines.
Ablations attribute these gains to leaf-level confidence estimation and aggregation rather than graph construction alone. Additional experiments show that the gains transfer across agent frameworks and that CRGs are the best-calibrated method on the cost–calibration Pareto frontier.

Our contributions are as follows:
\begin{itemize}

\item We introduce \textbf{\methodnameshort{}s}, a structured, inspectable framework for training-free confidence estimation from a single black-box agent trajectory.
\item We evaluate \textbf{\methodnameshort{}s} across diverse agentic benchmarks, agent frameworks, and backbone models, and show that they provide substantially better calibrated confidence than strong verbalized and surrogate-probability baselines.
\item We validate each design component through ablations and show that \methodnameshort{}s lie on the cost--calibration Pareto frontier: the best-performing instantiation costs only \$0.04--\$0.07 per estimate, a fraction of the cost of generating a trajectory with a strong LLM.
\end{itemize}

\section{Related Work}
\label{sec:related-work}

\paragraph{Confidence Estimation for LLMs and Agents}
Prior work estimates confidence in LLM outputs using either ``white-box'' signals such as token probabilities and internal representations \citep{vashurin_benchmarking_2025, malinin2021uncertainty}, ``black-box'' signals based on agreement across sampled outputs \citep{wang2023selfconsistency, kuhn2023semantic}, or LLM verbalized confidence \citep{kadavath2022languagemodelsmostlyknow, tian-etal-2023-just, xiong2024can}.
Extending these approaches to \textit{LLM agents} requires estimating confidence over the heterogeneous, multi-step trajectories they produce.
A few works have approached this challenge, but require learning from or aggregating ``white-box'' signals \citep{bouchard_beyond_2026, zhang2026agentic, zhao-etal-2025-uncertainty, ma2026sequencestructurerelationaluncertainty}, using signals from repeated or branching executions \citep{mehta2026agentsdisagreethemselvesmeasuring, duan_uprop_2025}, or learning from labeled trajectories \citep{xuan-etal-2026-confidence}. 
These approaches therefore cannot be applied directly to the real-world setting described in \autoref{sec:introduction}, where agents use costly, black-box LLMs.
While verbalized confidence has been explored for agents, findings suggest they are highly over-confident \citep{kaddour_agentic_2026}.

\paragraph{Structured Arguments and Belief Propagation.}

Prior work in other areas assesses target propositions through structured arguments that connect claims to evidence.
Assurance cases, a closely related framework widely used in safety engineering, justify a critical system property through a hierarchical argument grounded in evidence \citep{Rushby2015TheIA}.
A few works have explored the construction or evaluation of assurance cases using LLMs \citep{ikhwantri_explainable_2025, ikhwantri_evaluating_2026}; however, these tasks differ substantially from estimating agent confidence, and none use the structure of the argument to quantify truth in the root claim.
Related forms of structure also appear in claim verification, where a target claim is expanded into graphs or trees of logically related sub-claims, including entailed and contradictory premises \citep{hou_probabilistic_2025}.
The Bayesian approach of \citet{hou_probabilistic_2025} models claim truth values as latent variables in a hidden Markov tree and infers target truth from observed LLM confidence scores. Its emission probabilities, $p(S_u \mid Z_u)$, are estimated from held-out truth-labeled statements, requiring labeled calibration data before inference. 
We instead target a setting requiring neither fitted parameters nor labeled calibration data.

\section{Preliminaries}
\label{sec:preliminaries}

\paragraph{LLM Agents} An LLM agent is an AI agent whose policy is governed by an LLM \citep{language-agent-tutorial}.
Given a task instruction $\task$, the agent interacts with an environment $\env$ over a sequence of steps.
At step $t$, the agent's policy $\agentpolicy$ conditions on the interaction history $h_t = (\task, a_1, o_1, \ldots, a_{t-1}, o_{t-1})$ to sample an action $a_t \sim \agentpolicy(\cdot \mid h_t)$, with $a_t \in \actionspace$. 
Applying $a_t$ to $\env$ returns an observation $o_t \in \obsspace$.
Execution halts after $T$ steps, either when the agent emits a terminal action or when a step budget is exhausted. 
The agent's execution is summarized by its \emph{trajectory}:
\begin{equation}
    \agenttraj = (\task,\, a_1, o_1,\, a_2, o_2,\, \ldots,\, a_T, o_T),
    \label{eq:trajectory}
\end{equation}
together with the \emph{solution} $\solution$ it produces, which may be an explicit final response within $a_T$, or a change the agent has made to $\env$, depending on the task.

\paragraph{Confidence Estimation} Let $\agentsuccess \in \{0, 1\}$ indicate whether the agent's solution $\solution$ successfully completes task $\task$, as determined by the environment's verifier. 
A \emph{confidence estimator} maps an observable trajectory $\agenttraj$ to a confidence score, $\confestimator : \trajspace \to [0,1]$, where $\agentconf = \confestimator(\agenttraj)$ estimates the probability of success, given the trajectory $P(\agentsuccess = 1 \mid \agenttraj)$.
We study this problem under three practical constraints which together characterize the real-world deployment setting introduced in \autoref{sec:introduction}:
\begin{enumerate}
    \item \textbf{Black-box:} The estimator reads only the textual content of $\agenttraj$, not hidden representations, token-level probabilities, or reasoning tokens.
    \item \textbf{Single-trajectory:} The estimator observes only one completed execution $\agenttraj$, and may not re-run the agent or interact with the environment $\env$.
    \item \textbf{Training-free:} The estimator requires no labeled trajectories, and hence no supervised fitting or post-hoc calibration.
\end{enumerate}
An estimator satisfying all three applies to any agent task, framework, and backbone model without instrumentation, and supports deployments with proprietary models in state-changing agents.

\paragraph{Evaluating a Confidence Estimator}
\label{sec:prelim-evaluation}

Confidence estimates serve different purposes, so we report complementary metrics.
Adaptive \textbf{Expected Calibration Error (ECE)} measures calibration by comparing mean confidence with empirical success rates in ten approximately equal-count bins.
\textbf{Area Under the Receiver-Operating characteristic Curve (AUROC)} measures whether successful trajectories receive higher confidence than unsuccessful ones, while the \textbf{Brier score} measures squared probabilistic error.
Because calibration and ranking alone do not capture the consequences of acting on an estimate, we also report the \textbf{Behavioral Alignment Score (BAS)} \citep{wu2026bas}, which averages accept-or-abstain utility across risk thresholds and strongly penalizes confident failures.
Formal definitions of ECE and Brier score appear in \autoref{app:metrics}; the BAS decision model appears in \autoref{app:bas}.

\begin{figure*}[t]
    \centering
    \includegraphics[width=\linewidth]{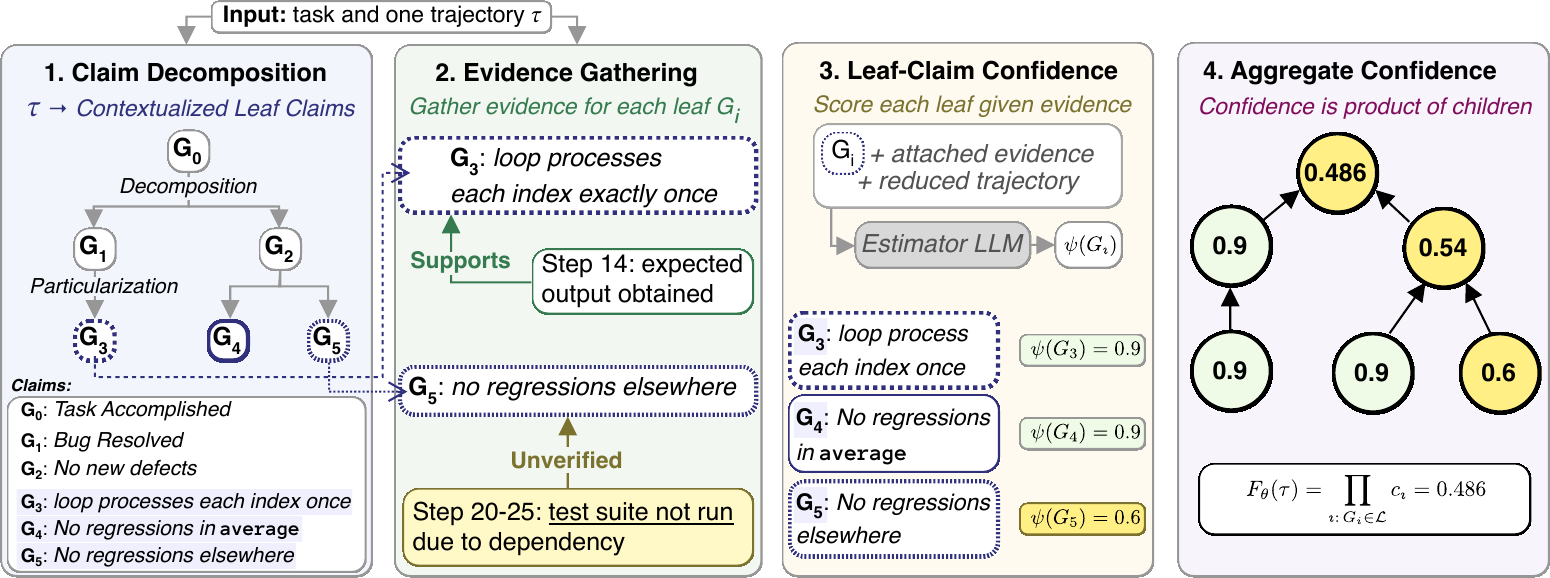}
    \caption{Overview of the \methodnameshort{} confidence estimation procedure (\autoref{sec:methods-graph-claim-decomp}-\autoref{sec:methods-confidence-aggregation}). Given a single task and trajectory $\agenttraj$, a constructor LLM (1) decomposes and particularizes the task-success claim and then (2) attaches cited evidence from $\agenttraj$ to the resulting leaf claims, using labels \colordarkgreen{Supports}, \colordarkgold{Unverified}, and \colordarkred{Undermines} (two types shown). In (3), an estimator LLM assigns a confidence $\psi(G_i)$ to each leaf given its evidence and the trajectory reduced according to the cited steps. In (4), confidence in each non-leaf claim is computed as the product of its children. \methodnameshort{}s support estimation and inspection: low-confidence paths can be traced to uncertain leaf claims and their associated evidence.}
    \label{fig:method-overview}
\end{figure*}

\section{Methodology}
\label{sec:methods}

Given a trajectory $\agenttraj$, we re-frame confidence estimation as evaluating confidence in the claim that the agent accomplished its task \textit{successfully.}
\methodnameshort{}
evaluates
this claim using a structured argument inspired by \textit{assurance cases}
in four steps, illustrated in \autoref{fig:method-overview}.
First, we use an LLM to recursively decompose it into simpler, contextualized claims which jointly define task success (\autoref{sec:methods-graph-claim-decomp}).
Second, an LLM grounds each resulting leaf claim in evidence extracted from the trajectory (\autoref{sec:methods-graph-evidence-gathering}).
We then estimate the confidence in the truth of each leaf claim using an LLM (\autoref{sec:methods-verbalizing}), and finally aggregate these estimates into confidence in the original claim (\autoref{sec:methods-confidence-aggregation}).
As in assurance cases, our motivation is that a high-level, multifaceted claim 
is assessed more reliably
when decomposed into concrete, evidence-grounded sub-claims. 
Formally, let $G_0$ denote the claim that \textit{the agent successfully completed
its task}. \methodnameshort{} estimates confidence in the probability that this claim is true given the trajectory:
\[
    \confestimator(\agenttraj)
    \;\approx\;
    P(G_0 \mid \agenttraj).
\]

\subsection{Claim Decomposition}
\label{sec:methods-graph-claim-decomp}

Starting from $\claimset = \{G_0\}$, an LLM iteratively expands a leaf %
claim by applying one of two refinement strategies, both inspired by argument strategies in the  Goal Structuring
Notation assurance case formalism \citep{kelly_weaver_gsn_2004}:

\begin{enumerate}
    \item \textbf{Decomposition} splits a claim $G \in \claimset$ into individually necessary and jointly sufficient sub-claims $\childfn(G)=\{G_1,\ldots,G_m\}$, such that $G \equiv \bigwedge_{j=1}^{m}G_j$ and the sub-claims are mutually independent given $\agenttraj$.
    \item \textbf{Particularization} restates a claim $G \in \claimset$ as one contextualized for the specific task, yielding one equivalent child $\childfn(G)=\{G'\}$ such that $G \equiv G'$.
\end{enumerate}

Decomposition splits a broad claim into constituents that can be checked separately; particularization restates a generic claim as one that can be evaluated in the context of the given task $\task$ and trajectory $\agenttraj$.
As an example, consider a coding agent tasked with fixing an off-by-one bug in a function \texttt{average}.\footnote{An off-by-one error is
a common class of bug in which a loop or index boundary is set one position too
high or too low --- here, a summation loop that skips the final element, so
that \texttt{average([1,2,3])} returns $1.5$ rather than $2.0$.}
The root claim $G_0$ might be decomposed into ``the reported bug has been resolved'' and ``no new defects were introduced,'' and the former particularized as ``the modified loop processes every intended index exactly once.''
Both strategies aim to preserve the same invariant: each parent claim is equivalent to the conjunction of its children:
\begin{equation}
    G \;\equiv\; \bigwedge_{G' \in \childfn(G)} G' .
    \label{eq:refinement-invariant}
\end{equation}
Applying this invariant recursively, the root claim $G_0$ is equivalent to the conjunction of the tree's leaves $\leafset$: 
\begin{equation}
    G_0 \;\equiv\; \bigwedge_{G \in \leafset} G .
    \label{eq:root-equiv}
\end{equation}
We formalize this equivalence and its implications for confidence aggregation in \autoref{app:detailed_proof}. 
The LLM may refine any current leaf or stop early, subject to a maximum
refinement depth $k$, where depth is the number of refinement edges on a
root-to-leaf path.
We set $k=5$ throughout; \autoref{app:depth} examines sensitivity to this choice.
\autoref{sec:analysis-graph-quality} evaluates how well constructed graphs
satisfy \autoref{eq:refinement-invariant} in practice.

\subsection{Evidence Gathering}
\label{sec:methods-graph-evidence-gathering}
To ground confidence estimation for the leaf claims, we enrich the claim tree induced by $\childfn$ with evidence from the trajectory, forming a graph.
In a single prompt conditioned on both $\agenttraj$ and this tree, an LLM produces a set of evidence items $\evidenceset$ and attaches each item $x \in \evidenceset$ to one or more leaf claims in $\leafset$.
Each evidence item $x$ describes a salient event from the trajectory for evaluating claims.
Each attachment cites a set of trajectory steps and receives one of three labels: $\texttt{Supported}$, $\texttt{Undermined}$, or $\texttt{Unverified}$.
We represent these labeled attachments as
\[
\relfn \subseteq
\evidenceset
\times \leafset
\times 2^{\{1,\ldots,T\}}
\times
\{\texttt{Supported},\texttt{Undermined},\texttt{Unverified}\},
\]
 where $(x,G,S,r)\in\relfn$ means that evidence item $x$ is attached to leaf claim $G$ citing trajectory steps $S$ with label $r$. 
Our relations follow work on evidence-based factuality evaluation \citep{fatahi-bayat-etal-2025-factbench, song-etal-2024-veriscore}, where each piece of evidence supports belief in claim, undermines belief in that claim, or indicates that a claim cannot be verified from the trajectory.
Together, $\claimset\cup\evidenceset$ form the graph's nodes, while $\childfn$ and $\relfn$ define its claim and evidence edges, respectively.

Returning to the running example, the claim ``the modified loop processes every
intended index exactly once'' may be $\texttt{Supported}$ by step 14, where the
agent runs the repaired function and obtains the expected output, and
$\texttt{Unverified}$ by steps 20--25, where the agent abandons efforts to run the project's test
suite after a dependency fails to install.
$\texttt{Unverified}$ evidence matters because a common failure mode of agents is not by acting
incorrectly, but stopping short of confirming that an action succeeded.
The prompt for claim decomposition and evidence gathering appears in \autoref{app:crg_construction}.

\subsection{Estimating Leaf Confidence}
\label{sec:methods-verbalizing}

Let $\leafconf : \leafset \to [0,1]$ assign each leaf claim $G \in \leafset$ a
confidence $c_G = \leafconf(G)$, interpreted as an estimate of
$P(G \mid \agenttraj)$.
We instantiate $\leafconf$ by verbalized confidence, prompting an estimator
LLM for the probability that $G$ holds.

\candidatesummaryadd{Each leaf is scored in a separate call using its attached evidence and a reduced trajectory. We retain the prefix through the last cited step, using $T$ as the cutoff when no step is cited. Uncited actions are replaced by framework-provided summaries when available and truncated otherwise. Summarizing them preserves context, while keeping the cited steps the most salient content in the prompt.
The reduction assumes that the latest relevant evidence was cited; it cannot recover evidence omitted during construction. Prompts appear in \autoref{app:prompts}.}

\subsection{Aggregating Confidence to the Root}
\label{sec:methods-confidence-aggregation}

Given leaf confidences, we define a confidence $f(G)$ for every claim in
$\claimset$ by recursion from the leaves:
\begin{equation}
    f(G) \;=\;
    \begin{cases}
        \leafconf(G), & G \in \leafset, \\[4pt]
        \displaystyle\bigoplus_{G' \in \childfn(G)} f(G'), & \text{otherwise,}
    \end{cases}
    \label{eq:recursion}
\end{equation}
where $\bigoplus$ is an aggregation operator over the confidence values of a claim's
children, and the estimator returns the confidence at the root,
$\confestimator(\agenttraj) = f(G_0)$.
This formulation lets us compare aggregation rules over identical graphs, which we do in Appendix~\ref{app:aggregation}.

Under \autoref{eq:root-equiv}, the root claim holds exactly when all leaf
claims hold, so its probability is their joint probability.
Computing this joint probability generally requires modeling dependencies
among the leaves.
For aggregation, we make the simplifying approximation that the children introduced at each decomposition are conditionally independent given $\tau$, yielding the approximation
$P(G_0\mid\agenttraj) \approx
\prod_{G\in\leafset}P(G\mid\agenttraj)$.
Replacing each marginal probability with its leaf estimate
$c_G \approx P(G\mid\agenttraj)$ gives:
\begin{equation}
    \confestimator(\agenttraj)
    \;=\;
    f(G_0)
    \;=\;
    \prod_{G\in\leafset} c_G .
    \label{eq:aggregation}
\end{equation}
Thus, our final confidence estimate is the product of the leaf confidences.
\autoref{app:detailed_proof} gives the general dependence-aware
factorization, marginal-only bounds, and derives the product rule under our local
conditional-independence approximation.

\section{Experimental Setup}
\label{sec:experiments}

\paragraph{Benchmarks and Trajectories.}
We evaluate confidence estimators on trajectories from three challenging benchmarks: SWE-Bench Verified~\citep{chowdhury_introducing_2024} for software engineering, EnterpriseOps-Gym~\citep{malay_enterpriseops-gym_2026} for tool-heavy enterprise workflows, and SkillsBench~\citep{li_skillsbench_2026} for long-horizon skill use. We focus on harder problems, where task success is less saturated and confidence estimation is more informative; \autoref{app:datasets} details the filtering criteria and splits. \autoref{tab:agent-success} summarizes the resulting data. For each problem, we collect one trajectory using the OpenHands agent framework~\citep{wang2025openhandssoftwareagentsdk}, %
with benchmark-specific iteration caps (\autoref{app:datasets}), with three agent LLMs: GPT-5.5~\citep{openai2026gpt55}, Gemini-3.5 Flash~\citep{google2026gemini35flash}, and the open-weight MiniMax-M3~\citep{lai2026minimaxsparseattention}. Each benchmark's verifier provides the success label $\agentsuccess$ used to evaluate confidence estimates. We evaluate transfer beyond OpenHands in \autoref{sec:results-other-frameworks}.

\begin{table}[t]
\centering
\caption{OpenHands success rates (\%) by agent LLM. $n$ is the number of problems.}
\label{tab:agent-success}
\small
\resizebox{\textwidth}{!}{%
\begin{tabular}{lcccccc}
\toprule
Benchmark & Num. Problems & Total Trajectories & GPT-5.5 & Gemini-3.5 Flash & MiniMax-M3 & Avg. \\
\midrule
SWE-Bench Verified & 306 & 900 & 73.9 & 74.3 & 71.7 & 73.3 \\
EnterpriseOps-Gym  & 325 & 975 & 41.8 & 44.3 & 31.4 & 39.2 \\
SkillsBench        & 81  & 219 & 50.7 & 50.7 & 45.2 & 48.9 \\
\bottomrule
\end{tabular}%
}
\end{table}

\paragraph{CRG Settings.}
We run \methodnameshort{} with two estimator LLMs, which read each completed trajectory and output a confidence: the open-weight Qwen-3.8 27B and the closed-source GPT-5.6 Sol \citep{openai2026gpt56}, both set to medium reasoning effort.
We use a maximum refinement depth of $k=5$ across
benchmarks without per-benchmark tuning; \autoref{app:depth} shows that no graph exceeds this depth in a development-set ablation.

\paragraph{Baselines.}
We compare CRG against three black-box baselines and one surrogate-probability baseline.
\emph{Basic Verbalizer} directly asks the estimator LLM for its confidence that the agent succeeded~\citep{zhang2026agentic}.
\emph{Reason-as-Graph} describes our claim-decomposition procedure in the estimator prompt, testing whether decomposition-oriented reasoning alone recovers the benefits of explicit graph construction and leaf aggregation.
Following self-consistency work~\citep{wang2023selfconsistency}, \emph{Verbal Consistency} samples ten true/false judgments of task success \textit{with reasoning} and uses the fraction judged true as confidence.\footnote{Due to inference cost, we run Verbal Consistency only with Qwen-3.8 27B.}
Finally, \emph{Surrogate LNSP} uses the length-normalized sequence probability of the agent's final action under a surrogate LLM~\citep{shrivastava_llamas_2023,zhang2026agentic,bouchard_beyond_2026}.
\autoref{app:surrogate-aggregations} provides comparisons to aggregations of surrogate token probabilities.
All black-box baselines use the same backbone LLM as the \methodnameshort{} constructor and estimator. \autoref{app:prompts} provides the complete prompts.

\paragraph{Metrics.}
We report adaptive ECE with 10 approximately equal-count bins, Brier score, AUROC, and BAS. Their definitions appear in \autoref{app:metrics} and \autoref{app:bas}; \autoref{app:ece-bin-sweep} reports equal-width ECE and sensitivity to bin count.

\section{Results and Analyses}
\label{sec:results}
We compare \methodnameshort{} against black-box and surrogate probability baselines in Section~\ref{sec:results-main}, test transfer to two other frameworks in Section~\ref{sec:results-other-frameworks}, and analyze the contribution of key components of the framework through ablations in Section~\ref{sec:analysis-component-ablation}.

\subsection{Main Results}
\label{sec:results-main}

\begin{table*}[t]
\centering
\caption{Confidence estimation results on OpenHands trajectories, averaged over agent LLMs. We report adaptive ECE (10 bins), Brier Score, AUROC, and BAS. Best per column in \textbf{bold}, second \underline{underlined}. $^\dagger$White-box: uses token probabilities from a surrogate LLM.}
\label{tab:main-results}
\setlength{\tabcolsep}{3pt}
\resizebox{\textwidth}{!}{%
\begin{tabular}{ll cccc cccc cccc}
\toprule
& & \multicolumn{4}{c}{SWE-Bench Verified} & \multicolumn{4}{c}{EnterpriseOps-Gym} & \multicolumn{4}{c}{SkillsBench} \\
\cmidrule(lr){3-6} \cmidrule(lr){7-10} \cmidrule(lr){11-14}
Estimator & LLM & ECE$\downarrow$ & Brier$\downarrow$ & AUROC$\uparrow$ & BAS$\uparrow$ & ECE$\downarrow$ & Brier$\downarrow$ & AUROC$\uparrow$ & BAS$\uparrow$ & ECE$\downarrow$ & Brier$\downarrow$ & AUROC$\uparrow$ & BAS$\uparrow$ \\
\midrule
\multirow{2}{*}{Basic Verbalizer} & GPT-5.6 Sol & 0.20 & 0.23 & 0.57 & $-$0.04 & 0.58 & 0.57 & 0.50 & $-$2.48 & 0.46 & 0.46 & 0.54 & $-$2.17 \\
 & Qwen-3.8 27B & 0.18 & 0.22 & \textbf{0.66} & 0.27 & 0.49 & 0.47 & 0.60 & $-$0.57 & 0.41 & 0.41 & \textbf{0.65} & $-$0.33 \\
\multirow{2}{*}{Reason-as-Graph} & GPT-5.6 Sol & 0.17 & 0.22 & 0.57 & 0.06 & 0.57 & 0.56 & 0.47 & $-$2.69 & 0.43 & 0.44 & 0.51 & $-$2.09 \\
 & Qwen-3.8 27B & 0.18 & 0.22 & \textbf{0.66} & 0.27 & 0.46 & 0.44 & \underline{0.62} & $-$0.50 & 0.38 & 0.38 & \underline{0.63} & $-$0.40 \\
Verbal Consistency & Qwen-3.8 27B & 0.24 & 0.25 & 0.54 & $-$7.58 & 0.45 & 0.45 & \underline{0.62} & $-$13.72 & 0.45 & 0.44 & 0.60 & $-$13.87 \\
Surrogate LNSP$^\dagger$ & Qwen-3.8 27B & \underline{0.10} & \underline{0.21} & 0.47 & \underline{0.36} & \underline{0.22} & \underline{0.28} & 0.61 & 0.04 & \underline{0.16} & \underline{0.28} & 0.46 & \underline{0.09} \\
\midrule
\rowcolor{blue!8} & GPT-5.6 Sol & 0.16 & 0.23 & \underline{0.62} & \underline{0.36} & 0.24 & 0.29 & \textbf{0.67} & \underline{0.06} & 0.22 & 0.29 & \textbf{0.65} & \underline{0.09} \\
\rowcolor{blue!8} \multirow{-2}{*}{\textbf{CRG (ours)}} & Qwen-3.8 27B & \textbf{0.09} & \textbf{0.20} & 0.57 & \textbf{0.37} & \textbf{0.13} & \textbf{0.24} & \underline{0.62} & \textbf{0.08} & \textbf{0.11} & \textbf{0.24} & \textbf{0.65} & \textbf{0.14} \\
\bottomrule
\end{tabular}
}
\end{table*}

\paragraph{Overall Performance.}
\autoref{tab:main-results} shows a consistent advantage for CRGs in calibration and decision quality across all three benchmarks.
\textbf{CRGs achieve the lowest adaptive ECE and Brier score and highest BAS on every benchmark}, with a CRG also attaining the best or second-best AUROC.
CRGs with Qwen-3.8 27B are best calibrated, achieving adaptive ECEs of 0.09, 0.13, and 0.11 on SWE-Bench Verified, EnterpriseOps-Gym, and SkillsBench, respectively, compared with 0.18--0.49 for direct verbalization and 0.10--0.22 for the surrogate-probability baseline.
\methodnameshort{}s with GPT-5.6 Sol substantially outperform their verbalized counterparts, indicating that improvements are not specific to a single \methodnameshort{} instantiation. 
We also observe divergence between calibration and discrimination: direct verbalizers reach AUROC 0.66 on SWE-Bench Verified despite substantial miscalibration, whereas Surrogate LNSP attains lower ECE but weak AUROC, performing near-chance on SWE-Bench Verified (0.47) and SkillsBench (0.46).
\textbf{\methodnameshort{} is the only evaluated approach that consistently combines strong calibration (ECE $\downarrow$), competitive discrimination (AUROC $\uparrow$), and positive expected utility (BAS $\uparrow$) across all three benchmarks.}
Bootstrap resampling supports these gains (\autoref{app:paired-bootstrap}).
We further find these improvements are inexpensive: \methodnameshort{} with Qwen 3.8 27B costs \$0.04--0.07 per trajectory and lies on the cost--calibration Pareto frontier, while verbalized baselines remain less calibrated even at higher reasoning effort and comparable cost (\autoref{app:analysis-cost}).

\paragraph{Evaluating Reliability.}
To investigate the source of calibration differences in \autoref{tab:main-results}, \autoref{fig:reliability} plots reliability curves and empirical confidence CDFs for the best-calibrated verbalizer, surrogate, and \methodnameshort{} instantiation.
The represented verbalizer (Reason-as-Graph) is systematically overconfident, assigning confidence scores over 90\% to the majority of trajectories on all benchmarks despite substantially lower success rates.
\autoref{fig:reliability} further reveals why calibration error alone is insufficient as a metric.
On SWE-Bench Verified, Surrogate LNSP's adaptive ECE is close to \methodnameshort{}'s (0.10 vs.\ 0.09), but its predictions are concentrated near the benchmark success rate (mean confidence 0.69 vs.\ 0.73 success rate).
This yields low calibration error without reliably distinguishing individual successes from failures, as reflected in its poor AUROC.
\textbf{\methodnameshort{}s instead distribute confidence estimates across a substantially wider range while remaining closer to the calibration diagonal on all benchmarks.}

\paragraph{Evaluating Decision Utility.}
To evaluate how confidence estimates inform decision-making under risk, \autoref{fig:utility-curves} reports selective utility across acceptance thresholds $t$, with BAS corresponding to the area under each curve.
A utility below zero means that for the risk-reward ratio defined by $t$, it would have been better to abstain altogether than to use that estimator for decision making.
At low thresholds (low-risk settings), all methods behave similarly. 
However, their behavior diverges as risk increases: 
Consistent with the overconfidence observed in \autoref{fig:reliability}, Reason-as-Graph incurs \textit{negative utility} on all benchmarks due to high-confidence failures, reaching $-3.6$ on EnterpriseOps-Gym. 
Surrogate LNSP avoids incurring costs of this magnitude, but achieves lower utility than \methodnameshort{}s at nearly every risk threshold.
\textbf{\methodnameshort{}s achieve the highest BAS on all three benchmarks, maintain positive utility for the widest span of risk thresholds, 
and are the only method that never falls substantially below total abstention.}
\methodnameshort{}s therefore support the best decision-making in both low-risk and high-risk environments.

\begin{figure*}[t]
\centering
\includegraphics[width=\linewidth]{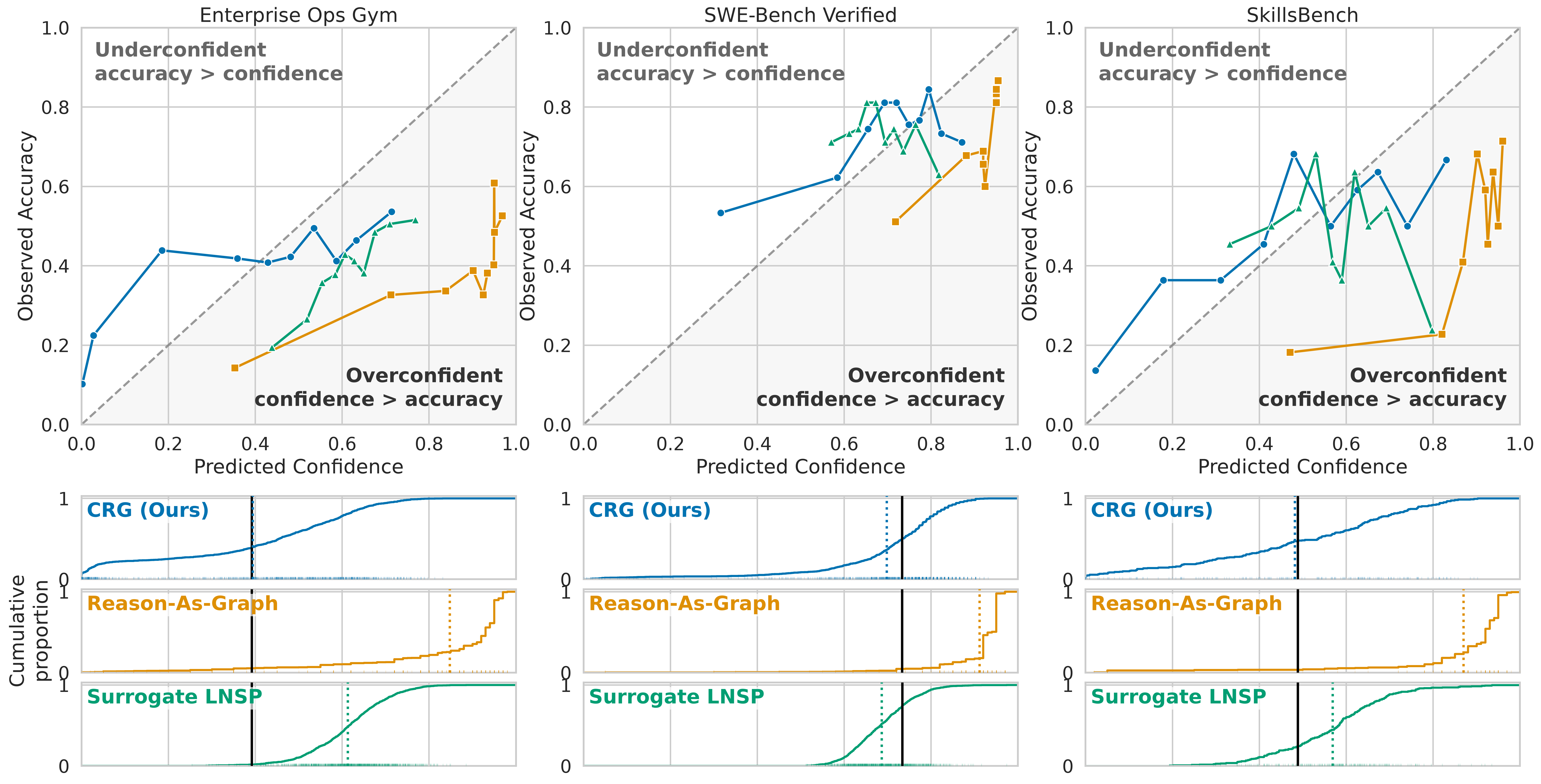}
\caption{\candidatesummaryremoval{Reliability diagrams (top) for the best method from each class on each benchmark. Perfectly calibration lies on the diagonal, with under-confidence lying above and over-confidence (shaded) below. Histograms (bottom) plot the \textit{distribution} of confidence estimates, with the dashed line indicating mean estimated confidence for that system, and the solid black line indicating the base success rate for that benchmark. For plot legibility, we use 8 evenly spaced bins.
We find \methodnameshort{}s (\plotours{Ours}) to be best calibrated for each benchmark, with the lowest ECE of any estimator (0.13, 0.09, and 0.11, left to right) and confidence estimates spread across the range rather than massed in a single bin.
Across benchmarks, the best verbalizer (\plotverbalizer{Reason-As-Graph}) is highly over-confident: placing most trajectories in its highest-confidence bin, well above their accuracy. 
The surrogate baseline (\plotsurrogate{Surrogate LNSP}) is reasonably well calibrated, but concentrates its estimates in a narrow band around the base rate, leaving it unable to separate successes from failures (AUROC 0.47 on SWE-Bench Verified).
} \candidatesummaryadd{Reliability diagrams (top) and empirical confidence CDFs (bottom) for \plotours{CRG (Ours)}, \plotverbalizer{Reason-as-Graph}, and \plotsurrogate{Surrogate LNSP}, using 10 equal-mass bins. Perfect calibration lies on the diagonal; under-confidence is above and overconfidence below (shaded). Dashed and solid vertical black lines indicate mean confidence and benchmark success rate, respectively. \methodnameshort{} is best calibrated on all three benchmarks, with estimates distributed across the confidence range. }
}
\label{fig:reliability}
\end{figure*}

\begin{figure*}[t]
\centering
\includegraphics[width=\linewidth]{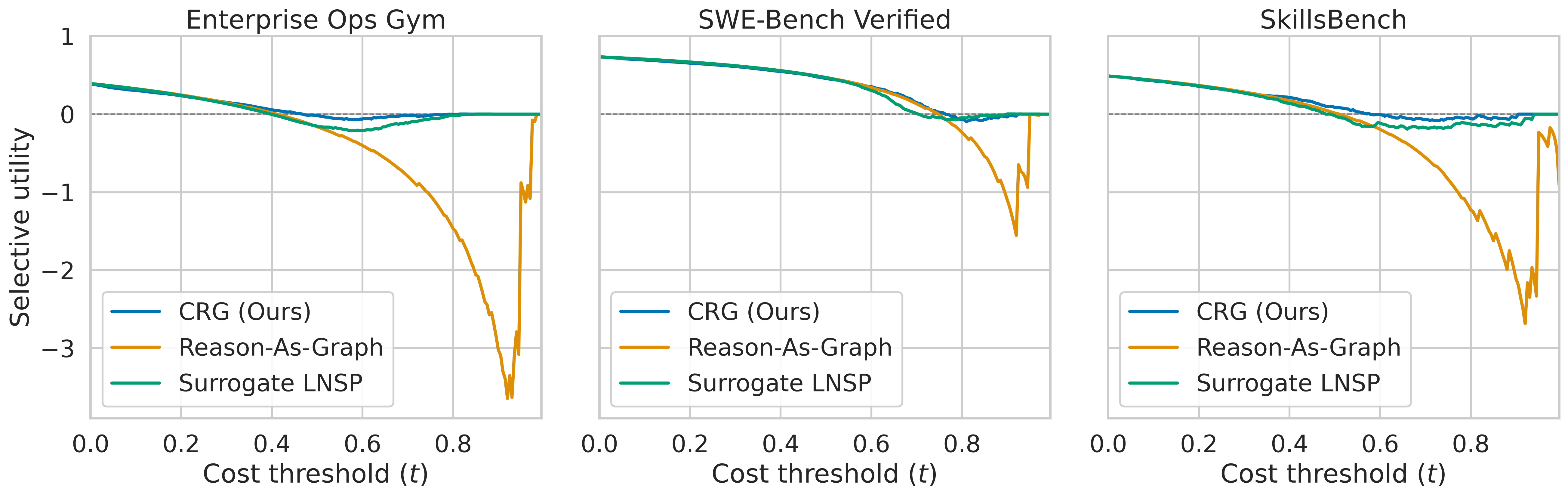}
\caption{Selective utility across cost thresholds $t$ for the best method from each class (all Qwen-3.8 27B). The dashed zero line corresponds to abstaining from all solutions, so values below it are worse than abstention. All estimators coincide at low $t$, where accepting all solutions is optimal. As $t$ increases, \plotverbalizer{Reason-As-Graph} incurs losses as large as $-3.6$ on EnterpriseOps-Gym, while \plotours{\methodnameshort{} (Ours)} remains above \plotsurrogate{Surrogate LNSP} through the mid range.}
\label{fig:utility-curves}
\end{figure*}

\subsection{Generalization to Other Agent Frameworks}
\label{sec:results-other-frameworks}

In this section, we evaluate whether \methodnameshort{} transfers to agent frameworks beyond OpenHands. We evaluate on 609 SWE-Bench Verified trajectories collected with two additional framework--model combinations: Codex with GPT-5.5 and Claude Code with Claude Opus 4.7. 
We compare against the black-box verbalized baselines; Surrogate LNSP is omitted because computing its token probabilities requires reconstructing the exact LLM context at each agent step.\footnote{This context may not be recoverable post hoc, for cloud-hosted or closed agent frameworks, making surrogate-based methods difficult to apply in such settings.}

\autoref{tab:other-frameworks} shows that the calibration and decision-utility gains transfer across agent frameworks %
\textbf{\methodnameshort{}}s reduce adaptive ECE from 0.19 to 0.05 on Codex and to 0.11 on Claude Code, while achieving the lowest Brier score and highest BAS on both frameworks.
For discrimination, \methodnameshort{} is comparable to both verbalizers on Codex trajectories (AUROC 0.61 vs.\ 0.63) but weaker on Claude Code trajectories (0.54 vs.\ 0.64--0.67). 
This reinforces the distinction between calibration and discrimination: in this transfer setting, \methodnameshort{}s produce substantially better probability estimates and acceptance decisions despite weaker ranking quality.

\begin{table*}[t]
\centering
\caption{Transfer to other agent frameworks on SWE-Bench Verified.
Codex (GPT-5.5) has $n=305$ trajectories and 72.5\% success;
Claude Code (Claude Opus 4.7) has $n=304$ and 72.0\% success.
All estimators use Qwen-3.8 27B. ECE is adaptive ECE with 10 bins.}
\label{tab:other-frameworks}
\small
\setlength{\tabcolsep}{4pt}
\begin{tabular}{lcccc@{\hspace{1em}}cccc}
\toprule
& \multicolumn{4}{c}{Codex (GPT 5.5)} & \multicolumn{4}{c}{Claude Code (Opus 4.7)} \\
\cmidrule(lr){2-5}\cmidrule(l){6-9}
Estimator 
& ECE$\downarrow$ & Brier$\downarrow$ & AUROC$\uparrow$ & BAS$\uparrow$
& ECE$\downarrow$ & Brier$\downarrow$ & AUROC$\uparrow$ & BAS$\uparrow$ \\
\midrule
\textit{Qwen 3.8 27B} & \multicolumn{8}{c}{} \\

\quad Basic Verbalizer
& 0.19 & 0.24 & \textbf{0.63} & 0.22
& 0.19 & 0.23 & 0.64 & 0.25 \\
\quad Reason-as-Graph
& 0.19 & 0.23 & \textbf{0.63} & 0.23
& 0.19 & 0.23 & \textbf{0.67} & 0.26 \\
\rowcolor{blue!8}
\quad \textbf{CRG (ours)}
& \textbf{0.05} & \textbf{0.20} & 0.61 & \textbf{0.38}
& \textbf{0.11} & \textbf{0.22} & 0.54 & \textbf{0.34} \\
\bottomrule
\end{tabular}
\end{table*}

\subsection{Component Ablation}
\label{sec:analysis-component-ablation}

We examine the contribution of each component of our method in \autoref{tab:propagation-ablation}.
\textit{Reason-as-Graph} is our assurance-case inspired verbal baseline.
\textit{Reason-with-Graph} constructs the same claim and evidence graph as a \methodnameshort{}, but directly verbalizes confidence in the root claim $G_0$ conditioned on the graph, ablating our propagation method.
We find \textit{Reason-with-Graph} provides only a modest calibration improvement over Reason-as-Graph (adaptive ECE $0.38 \rightarrow 0.35$; Brier $0.37 \rightarrow 0.36$). Using our full method with leaf-level scoring and aggregation substantially improves both calibration and decision utility: ECE drops to $0.10$, Brier to $0.23$, and BAS increases from $-0.16$ to $0.17$. \textbf{Thus, the main gain does not come from simply exposing the model to a structured decomposition; it arises from using that decomposition to make localized confidence judgments and combine them into the final estimate.}
The improvement is primarily in confidence quality rather than ranking: \methodnameshort{} achieves AUROC $0.68$, below Reason-as-Graph ($0.73$) but above Reason-with-Graph ($0.61$). This is consistent with our main results, where improved calibration need not imply improved pairwise discrimination. Finally, \autoref{app:aggregation} compares alternative aggregation rules on the same graphs --- arithmetic and geometric means, maximum, the Fréchet bounds, and
the learned belief propagation of \citet{hou_probabilistic_2025} --- and finds that product aggregation gives the strongest calibration and competitive discrimination among training-free rules.

\subsection{Additional Findings}
\label{sec:results-additional-findings}

In \autoref{app:analysis-case-studies}, we analyze case studies, demonstrating how our claim graph might be audited by a human and overruled at inference time. 
Further analyses show that our method is cost-efficient (\autoref{app:analysis-cost}), our gains are also robust in post-hoc calibration with temperature scaling (\autoref{app:temperature-scaling}) and our approach is insensitive to LLM reasoning effort (\autoref{app:reasoning-effort}). 
In \autoref{sec:analysis-graph-quality}, we use textual entailment to test the refinement invariant in \autoref{eq:refinement-invariant} and diagnose violations of the independence approximation behind \autoref{eq:aggregation}; we find CRG remains effective despite imperfect adherence to these assumptions.

\begin{table}[t]
\centering
\caption{Ablating graph construction and confidence propagation on development data
(Qwen-3.8 27B, $n=495$).
Reason-as-Graph decomposes the task in the prompt and reports one confidence;
Reason-with-Graph builds the graph but scores the root directly rather than
propagating from leaves.}
\label{tab:propagation-ablation}
\small
\setlength{\tabcolsep}{4pt}
\begin{tabular}{lcccc}
\toprule
Method & ECE$\downarrow$ & Brier$\downarrow$ & AUROC$\uparrow$ & BAS$\uparrow$ \\
\midrule
Reason-as-Graph (no explicit graph)          & 0.38 & 0.37 & \textbf{0.73} & $-$0.23 \\
Reason-with-Graph (no propagation)  & 0.35 & 0.36 & 0.61 & $-$0.16  \\
\rowcolor{blue!8} \textbf{CRG (graph + propagation)} & \textbf{0.10} & \textbf{0.23} & 0.68 & \textbf{0.17}  \\
\bottomrule
\end{tabular}
\end{table}

\section{Conclusion}
\label{sec:conclusion}

We introduced \textbf{\methodnamefull{}}, which estimates task-success confidence from a single black-box agent trajectory by decomposing success into evidence-grounded claims and
aggregating their confidences. \textbf{\methodnameshort{}s}
improve calibration and decision utility across benchmarks over evaluated baselines
while retaining strong discrimination.
This enables practical, auditable confidence estimates for black-box agents without training.

\subsection*{AI use statement}

In this work, we used generative AI tools for refinement of the writing, formatting help, literature search, figure conversion and implementation, and in the assistance of writing for proofs.
We have not used generative AI tools for producing our ideas, designing our pipelines or experiments, and the rest of the required disclosure tasks are not applicable to this work.
Additionally, we used generative AI tools to write code and create plots, which we carefully reviewed and improved manually.
We have reviewed all AI-assisted content used anywhere in the production of this work. 
Each paper found through LLM-driven literature search has been carefully reviewed, and extensive further literature review has been conducted manually. 
We used LLMs for the writing in an atomic fashion: to refine the phrasing of individual points in the text, rather than their semantics.
We take responsibility for the final content of this work, including text, claims or artifacts produced with the aid of generative AI.

\subsection*{Ethics statement}

Our work aims to support safer use of LLM agents by providing confidence estimates for individual executions. However, these estimates should not be treated as guarantees of correctness. An inaccurate or overconfident estimate could create false reassurance, especially in high-stakes applications. 

Our method also inherits limitations from the LLMs used to construct and evaluate the reasoning graph: decompositions may be incomplete, dependencies between claims may be missed, and evidence may be misinterpreted. CRG confidence should therefore be used as a decision-support signal alongside appropriate verification and human oversight.

\subsection*{Reproducibility statement}

We support reproducibility by describing our method in \autoref{sec:methods} and the benchmarks, estimator settings, baselines, and metrics in \autoref{sec:experiments}. \autoref{app:datasets} documents benchmark filtering, trajectory collection, and success-label assignment; \autoref{app:prompts} provides the full prompts; and \autoref{app:detailed_proof} states the assumptions and proofs for confidence aggregation. The implementation repository contains the prompt templates, experiment configurations, evaluation scripts, and dependency lockfile used in our experiments. \autoref{app:paired-bootstrap} describes the resampling procedure for the uncertainty analysis.
Framework versions, randomization settings, and their scope are documented
in \autoref{app:reproducibility-details}.

\bibliography{iclr2027_conference}
\bibliographystyle{iclr2027_conference}

\appendix

\section{Appendix}
\subsection{Benchmarks and Trajectory Collection}
\label{app:datasets}

\subsubsection*{Test Set: Main Experiments}
We construct our test set for evaluating confidence estimators using three agentic benchmarks:
SWE-Bench Verified~\citep{chowdhury_introducing_2024},
SkillsBench~\citep{li_skillsbench_2026}, and
EnterpriseOps-Gym~\citep{malay_enterpriseops-gym_2026}.
Because confidence estimation is most informative when agent success is not
saturated, we focus on the more challenging tasks from each benchmark.
We describe the problems chosen and number of trajectories sampled below.

\paragraph{SWE-Bench Verified.}
SWE-Bench Verified evaluates software-engineering agents on real-world
repository issues.
We select the \texttt{test} split of
\texttt{princeton-nlp/SWE-bench\_Verified} and retain instances whose
\texttt{difficulty} annotation is greater than 15 minutes: \texttt{15 min - 1 hour},
\texttt{1-4 hours}, or \texttt{>4 hours}, yielding \textbf{306 problems}.

\paragraph{SkillsBench.}
SkillsBench contains 87 long-horizon tasks organized into three difficulty
levels: 6 \emph{Core} tasks requiring less than one hour, 53
\emph{Extended} tasks requiring 1--4 hours, and 28 \emph{Extreme} tasks
requiring more than four hours. We retain the Extended and Extreme tasks
and use the \emph{with-skills} condition, giving \textbf{81 eligible
problems}.
The with-skills condition loads the task's supplied skills into the agent
context. 

\paragraph{EnterpriseOps-Gym.}
EnterpriseOps-Gym evaluates agents on tool-heavy workflows spanning multiple
enterprise application domains. Unlike the other two benchmarks, it does not
provide the same human-effort difficulty annotation. We use \textbf{50\% of the problems within
each domain}, yielding \textbf{325 problems} while preserving domain
composition. The eight domains are Calendar, CSM, Drive, Email, HR, Hybrid,
ITSM, and Teams.

\paragraph{Trajectory collection.}
For each selected problem, we attempt one OpenHands
trajectory~\citep{wang2025openhandssoftwareagentsdk} using
GPT-5.5~\citep{openai2026gpt55},
Gemini-3.5 Flash~\citep{google2026gemini35flash}, or
MiniMax-M3~\citep{lai2026minimaxsparseattention}.
The saved run configurations use \emph{high} reasoning effort for all
three agent LLMs. 
SkillsBench and EnterpriseOps-Gym trajectories used
OpenHands SDK 1.31.0 with a 100-iteration cap. 
The SWE-Bench trajectories instead use a 500-iteration cap as they are sourced from the OpenHands Index \citep{openhandsindex2025}, rather than recomputed.
We omit any trajectories which crash or cannot be represented in the 262K context-window length of Qwen 3.8 27B for either \methodnameshort{}s or verbalized baselines.
This yields the following evaluation set counts: 900 SWE-Bench trajectories across 305
distinct problems (303 GPT-5.5, 300 Gemini-3.5 Flash, and 297 MiniMax-M3),
219 SkillsBench trajectories across 74 problems (73 per agent LLM), and
975 EnterpriseOps-Gym trajectories across 325 problems (325 per agent LLM).
These OpenHands trajectories constitute our main evaluation set. 
We separately evaluate transfer to Codex and Claude Code and describe that evaluation set in
\autoref{sec:results-other-frameworks}.

\paragraph{Success labels.}
For each scored trajectory, we use the benchmark's native verifier to
obtain the binary success label $\agentsuccess \in \{0,1\}$. For
SWE-Bench Verified and EnterpriseOps-Gym this is the \texttt{resolved} status in the
benchmark evaluation report. For SkillsBench, numeric verifier reward
counts as success only when it equals one; partial reward is a failure
for our binary evaluation. 

\subsubsection*{Development Set}

For ablations (\autoref{sec:analysis-component-ablation}, \autoref{app:depth}, \autoref{app:aggregation}), we construct a development set from two benchmarks:
\begin{enumerate}
    \item \textbf{SWE-Smith} \citep{yang2025swesmithscalingdatasoftware}: a dataset of \textit{synthetic} repository-level SWE problems. We sample 64 random problems.
    \item \textbf{EnterpriseOps Gym} \citep{malay_enterpriseops-gym_2026}: the tool-intensive benchmark used in our main evaluation, from which we sample 64 problems (8 per domain) which do not overlap with problems in the test set described above.
\end{enumerate}

We generate trajectories with a maximum of 100 steps, using the same agent LLMs: GPT 5.5, Gemini 3.5 Flash, and Minimax M3. 
While we do \textit{no hyper-parameter tuning} on this development set, it was used in the initial development of our prompts.\footnote{While our method has a maximum refinement depth hyper-parameter $k$, we find depths greater than 5 are never achieved, and therefore do not tune this parameter per-benchmark or to optimize performance on development data}
On the test set, we therefore demonstrate generalization of our method to harder, real problems from SWE-Bench Verified, and a new domain in SkillsBench.

\subsection{Calibration and Brier Score}
\label{app:metrics}

For $n$ scored trajectories, let $\agentconf_i\in[0,1]$ be the estimated
probability of success and $\agentsuccess_i\in\{0,1\}$ the observed outcome.
Given a partition of the trajectory indices into bins
$B_1,\ldots,B_M$, define
\[
\mathrm{conf}(B_m)=\frac{1}{|B_m|}\sum_{i\in B_m}\agentconf_i,
\qquad
\mathrm{acc}(B_m)=\frac{1}{|B_m|}\sum_{i\in B_m}\agentsuccess_i.
\]
Expected Calibration Error (ECE) \citep{guo_onconfcal_2017} is
\begin{equation}
\mathrm{ECE}
=\sum_{m=1}^{M}\frac{|B_m|}{n}
\left|\mathrm{acc}(B_m)-\mathrm{conf}(B_m)\right|.
\label{eq:ece}
\end{equation}
For our main results, we use adaptive ECE \citep{Nixon_2019_CVPR_Workshops}:
we sort trajectories by confidence and divide them into $M=10$
approximately equal-count bins. In our implementation, bin sizes differ
by at most one, and tied confidences retain their original order according to a stable sort, following \cite{Nixon_2019_CVPR_Workshops}.
Equal-count binning avoids sparsely populated bins, especially on smaller datasets, reducing the
variance of per-bin accuracy estimates. It has also been reported to have
lower bias \citep{pmlr-v151-roelofs22a}.
\autoref{app:ece-bin-sweep} examines other bin counts and equal-width bins.

We also report the binary Brier score,
\begin{equation}
\mathrm{Brier}
=\frac{1}{n}\sum_{i=1}^{n}
(\agentconf_i-\agentsuccess_i)^2.
\label{eq:brier}
\end{equation}
Lower values indicate better probabilistic predictions for both metrics.
The Brier score summarizes squared prediction error, whereas ECE
specifically measures the gap between confidence and observed success
frequency within bins.

\subsection{Case Studies}
\label{app:analysis-case-studies}

\begin{figure*}[t]
\centering
\begin{minipage}[t]{0.49\textwidth}
\centering
\resizebox{\linewidth}{!}{%
\begin{tikzpicture}[
  >=Stealth,
  edge/.style={->, draw=gray!70, line width=0.8pt},
  base/.style={rounded corners=2pt, align=center, minimum height=10mm, inner sep=4pt, font=\small},
  high/.style={base, draw=crghighborder, fill=crghighfill},
  mid/.style={base, draw=crgmidborder, fill=crgmidfill}
]
\node[mid, minimum width=52mm] (root) {\textbf{Successful final outcome}\\aggregate: 0.58};
\node[mid, below left=16mm and 26mm of root, minimum width=39mm] (obj) {\textbf{Primary objective}\\conf 0.71};
\node[high, below=16mm of root, minimum width=39mm] (int) {\textbf{Integrity}\\conf 0.90};
\node[high, below right=16mm and 26mm of root, minimum width=39mm] (pol) {\textbf{Policy compliance}\\conf 0.90};

\node[high, below=19mm of obj, xshift=-30mm, minimum width=28mm] (case) {\textbf{Case created}\\conf 0.96};
\node[high, right=5mm of case, minimum width=28mm] (casestatus) {\textbf{Case status}\\conf 0.93};
\node[high, right=5mm of casestatus, minimum width=28mm] (task) {\textbf{Task created}\\conf 0.95};
\node[high, right=5mm of task, minimum width=28mm] (survey) {\textbf{Survey linked}\\conf 0.93};
\node[high, below=11mm of case, minimum width=28mm] (taskstatus) {\textbf{Task status}\\conf 0.95};
\node[high, below=11mm of casestatus, minimum width=34mm] (notify) {\textbf{Notification sent}\\conf 0.95};

\draw[edge] (root) -- (obj);
\draw[edge] (root) -- (int);
\draw[edge] (root) -- (pol);
\draw[edge] (obj) -- (case);
\draw[edge] (obj) -- (casestatus);
\draw[edge] (obj) -- (task);
\draw[edge] (obj) -- (survey);
\draw[edge] (case) -- (taskstatus);
\draw[edge] (casestatus) -- (notify);
\end{tikzpicture}%
}
\par\smallskip\textbf{(a) Successful \methodnameshort{ }Estimation}
\end{minipage}\hfill
\begin{minipage}[t]{0.49\textwidth}
\centering
\resizebox{\linewidth}{!}{%
\begin{tikzpicture}[
  >=Stealth,
  edge/.style={->, draw=gray!70, line width=0.8pt},
  badedge/.style={->, draw=crglowborder, line width=1pt},
  base/.style={rounded corners=2pt, align=center, minimum height=10mm, inner sep=4pt, font=\small},
  high/.style={base, draw=crghighborder, fill=crghighfill},
  low/.style={base, draw=crglowborder, fill=crglowfill}
]
\node[low, minimum width=52mm] (root) {\textbf{Successful final outcome}\\aggregate: 0.08};
\node[high, below left=16mm and 24mm of root, minimum width=38mm] (obj) {\textbf{Objective achieved}\\conf 0.88};
\node[high, below=16mm of root, minimum width=38mm] (int) {\textbf{Data integrity}\\conf 0.93};
\node[low, below right=16mm and 24mm of root, minimum width=38mm] (pol) {\textbf{Policy compliance}\\conf 0.10};

\node[high, below left=18mm and 3mm of obj, minimum width=28mm] (case) {\textbf{Case created}\\conf 0.95};
\node[high, below right=18mm and 3mm of obj, minimum width=28mm] (interaction) {\textbf{Interaction}\\conf 0.93};

\node[low, below=30mm of pol, text width=57mm] (ev) {\textbf{Undermining evidence}\\First attempt used status \texttt{completed}, outside the allowed enum domain; the system rejected the operation.};

\draw[edge] (root) -- (obj);
\draw[edge] (root) -- (int);
\draw[edge] (root) -- (pol);
\draw[edge] (obj) -- (case);
\draw[edge] (obj) -- (interaction);
\draw[badedge] (ev) -- node[right, font=\scriptsize, text=crglowborder]{Undermines} (pol);
\end{tikzpicture}%
}
\par\smallskip\textbf{(b) Failed \methodnameshort{} Estimation}
\end{minipage}
\caption{\methodnameshort{}s for two successful EnterpriseOps-Gym
trajectories. (a) A correct estimate: an HR-enrollment task decomposes into
verifiable subtasks (case created, status set, survey linked, notification
sent), each with high confidence. (b) An incorrect estimate, caused by
incomplete graph construction: the agent's first attempt used a status outside
the allowed enum domain and was rejected, but no evidence item records the
subsequent correction. The policy-compliance claim is therefore scored against
the violation alone, receiving $0.10$ and driving the root to $0.08$. Because
the graph exposes which claim and which cited step produced the estimate, a
user can inspect this reasoning and overrule it.}
\label{fig:case-studies}
\end{figure*}

We examine two EnterpriseOps-Gym trajectories to illustrate how
\methodnameshort{} connects task-specific claims and trajectory
evidence to the final confidence estimate.

\paragraph{Aggregating high-confidence conditions.}
In \autoref{fig:case-studies}(a), a health-insurance enrollment task
requires creating an HR case, setting its status, creating and linking
a survey task, updating the task status, and notifying the employee.
The corresponding leaf claims receive confidences of $0.93$--$0.96$.
Their aggregation gives a primary-objective confidence of $0.71$ and,
after accounting for integrity and policy-compliance claims, a root
confidence of $0.58$. \textbf{The graph makes explicit why several
individually high-confidence conditions can still yield a more cautious
estimate of overall task success.}

\paragraph{Tracing a low-confidence estimate.}
\autoref{fig:case-studies}(b) shows a trajectory that also succeeded, but on
which \methodnameshort{} estimation is wrong.
The agent initially attempts to create a support case using the status
\texttt{completed}, which is outside the allowed value set and is rejected by
the system.
\methodnameshort{} attaches this event as undermining evidence to the
policy-compliance claim, which receives confidence $0.10$; although the
case-creation and interaction claims receive $0.95$ and $0.93$, the root falls
to $0.08$.
The agent subsequently corrected the value and completed the task, but no
evidence item records the correction, so the claim was scored against the
violation alone --- a failure of graph construction rather than of the
confidence estimate it supports.
\textbf{Unlike a single scalar confidence, the graph identifies the specific
claim and trajectory evidence responsible for the low estimate}, so a user
deciding whether to accept this solution can check that step, find the
correction, and overrule the estimate.

\subsection{Cost}
\label{app:analysis-cost}

We compare the inference cost and calibration of each estimator in \autoref{fig:cost}. Because methods differ in model size and in prompt and completion length, we convert inference usage to an estimated cost per trajectory in USD.\footnote{We run Qwen-3.8 27B locally and estimate its cost using the rates reported by \href{https://openrouter.ai/qwen/qwen3.8-27b}{OpenRouter}.}

\textbf{Across all three benchmarks, \methodnameshort{} lies on the cost--calibration Pareto frontier, together with the cheaper but less calibrated Surrogate LNSP.} \methodnameshort{} costs approximately \$0.04--\$0.07 per trajectory and achieves lower adaptive ECE than the surrogate on every benchmark, with the largest gap on EnterpriseOps-Gym (0.13 vs.\ 0.22). Thus, \methodnameshort{} trades additional inference cost for substantially improved calibration in settings where the cheaper surrogate is insufficient.

The gains are also not explained by inference budget alone. On SWE-Bench Verified, increasing the reasoning effort of verbalized baselines can make them more expensive than \methodnameshort{} while leaving them substantially less calibrated. This suggests that the improvement comes from how \methodnameshort{} structures and aggregates confidence estimation, rather than simply from additional test-time compute.

\begin{figure}[t]
\centering
\includegraphics[width=\linewidth]{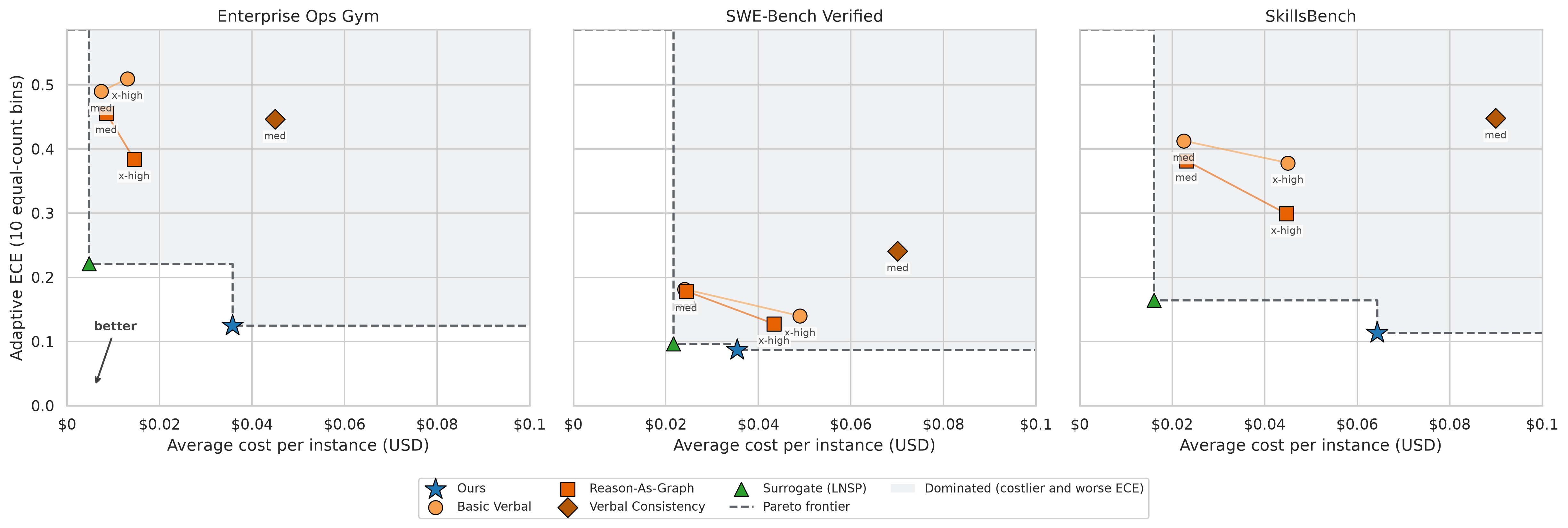}
\caption{Cost--calibration tradeoff of each estimator: average cost per trajectory (USD) against adaptive ECE ($\downarrow$), so lower-left is better. \plotours{\methodnameshort{}s} are the best-calibrated estimators on every benchmark, sharing the Pareto frontier with only the cheaper but worse-calibrated \plotsurrogate{Surrogate LNSP}. Importantly, \plotours{\methodnameshort{}s} are cost-efficient in absolute terms: \$0.04--0.07 per trajectory, compared to the cost of generating the trajectory being judged, which averages \$0.15, \$1.06, and \$0.95 on EnterpriseOps-Gym, SWE-Bench, and SkillsBench respectively.
}
\label{fig:cost}
\end{figure}

\subsection{ECE Robustness to Binning}
\label{app:ece-bin-sweep}

ECE depends on how predictions are partitioned into bins, so we recompute
standard and adaptive ECE with 5, 8, 10, 15, and 20 bins.
\autoref{tab:ece-bin-sweep} shows that the calibration ranking is robust to
these choices. CRG with Qwen-3.8 27B achieves the lowest or tied-lowest
adaptive ECE for every benchmark and bin count. It also achieves the lowest
or tied-lowest standard ECE throughout, except for the five-bin SWE-Bench
setting, where its ECE differs from Surrogate LNSP by 0.01. Thus, neither the
ten-bin choice nor equal-mass binning explains CRG's calibration gains.

\begin{table*}[t]
\centering
\caption{Sensitivity of confidence-estimation calibration error to the number of bins. ECE uses equal-width bins and Adaptive ECE uses equal-mass bins. Best per column in \textbf{bold}, second \underline{underlined}. $^\dagger$White-box: uses token probabilities from a surrogate LLM.}
\label{tab:ece-bin-sweep}
\setlength{\tabcolsep}{3pt}
\resizebox{\textwidth}{!}{%
\begin{tabular}{ll ccccc ccccc ccccc}
\toprule
& & \multicolumn{5}{c}{SWE-Bench Verified} & \multicolumn{5}{c}{EnterpriseOps-Gym} & \multicolumn{5}{c}{SkillsBench} \\
\cmidrule(lr){3-7} \cmidrule(lr){8-12} \cmidrule(lr){13-17}
Estimator & LLM & 5 & 8 & 10 & 15 & 20 & 5 & 8 & 10 & 15 & 20 & 5 & 8 & 10 & 15 & 20 \\
\midrule
\multicolumn{17}{l}{\textit{Standard ECE (equal-width bins)}} \\
\addlinespace[2pt]
\multirow{2}{*}{Basic Verbalizer} & GPT-5.6 Sol & 0.21 & 0.21 & 0.21 & 0.21 & 0.21 & 0.58 & 0.58 & 0.58 & 0.58 & 0.58 & 0.46 & 0.46 & 0.46 & 0.46 & 0.46 \\
 & Qwen-3.8 27B & 0.18 & 0.18 & 0.18 & 0.18 & 0.18 & 0.49 & 0.49 & 0.49 & 0.49 & 0.49 & 0.41 & 0.41 & 0.41 & 0.41 & 0.41 \\
\multirow{2}{*}{Reason-as-Graph} & GPT-5.6 Sol & 0.17 & 0.17 & 0.17 & 0.17 & 0.18 & 0.57 & 0.57 & 0.57 & 0.57 & 0.57 & 0.43 & 0.43 & 0.43 & 0.44 & 0.44 \\
 & Qwen-3.8 27B & 0.18 & 0.18 & 0.18 & 0.18 & 0.18 & 0.46 & 0.46 & 0.46 & 0.46 & 0.46 & 0.38 & 0.38 & 0.38 & 0.38 & 0.38 \\
Verbal Consistency & Qwen-3.8 27B & 0.25 & 0.25 & 0.25 & 0.25 & 0.25 & 0.45 & 0.45 & 0.45 & 0.45 & 0.45 & 0.45 & 0.45 & 0.45 & 0.45 & 0.45 \\
Surrogate LNSP$^\dagger$ & Qwen-3.8 27B & \textbf{0.07} & \underline{0.09} & \underline{0.09} & \underline{0.09} & \textbf{0.09} & \underline{0.22} & \underline{0.22} & \underline{0.22} & \underline{0.22} & \underline{0.22} & \underline{0.11} & \underline{0.13} & \underline{0.13} & \underline{0.15} & \underline{0.17} \\
\midrule
\rowcolor{blue!8} & GPT-5.6 Sol & 0.16 & 0.15 & 0.16 & 0.16 & \underline{0.16} & 0.25 & 0.25 & 0.25 & 0.25 & 0.25 & 0.22 & 0.23 & 0.22 & 0.23 & 0.24 \\
\rowcolor{blue!8} \multirow{-2}{*}{\textbf{CRG (ours)}} & Qwen-3.8 27B & \underline{0.08} & \textbf{0.08} & \textbf{0.08} & \textbf{0.08} & \textbf{0.09} & \textbf{0.13} & \textbf{0.12} & \textbf{0.13} & \textbf{0.13} & \textbf{0.13} & \textbf{0.10} & \textbf{0.11} & \textbf{0.12} & \textbf{0.13} & \textbf{0.13} \\
\midrule
\multicolumn{17}{l}{\textit{Adaptive ECE (equal-mass bins)}} \\
\addlinespace[2pt]
\multirow{2}{*}{Basic Verbalizer} & GPT-5.6 Sol & 0.20 & 0.20 & 0.20 & 0.20 & 0.20 & 0.58 & 0.58 & 0.58 & 0.58 & 0.58 & 0.46 & 0.46 & 0.46 & 0.46 & 0.46 \\
 & Qwen-3.8 27B & 0.18 & 0.18 & 0.18 & 0.18 & 0.18 & 0.49 & 0.49 & 0.49 & 0.49 & 0.49 & 0.41 & 0.41 & 0.41 & 0.41 & 0.41 \\
\multirow{2}{*}{Reason-as-Graph} & GPT-5.6 Sol & 0.17 & 0.17 & 0.17 & 0.17 & 0.17 & 0.57 & 0.57 & 0.57 & 0.57 & 0.57 & 0.43 & 0.43 & 0.43 & 0.43 & 0.43 \\
 & Qwen-3.8 27B & 0.18 & 0.18 & 0.18 & 0.18 & 0.18 & 0.46 & 0.46 & 0.46 & 0.46 & 0.46 & 0.38 & 0.38 & 0.38 & 0.38 & 0.38 \\
Verbal Consistency & Qwen-3.8 27B & 0.24 & 0.24 & 0.24 & 0.24 & 0.24 & 0.45 & 0.42 & 0.45 & 0.45 & 0.45 & 0.45 & 0.45 & 0.45 & 0.45 & 0.45 \\
Surrogate LNSP$^\dagger$ & Qwen-3.8 27B & \textbf{0.09} & \underline{0.09} & \underline{0.10} & \underline{0.10} & \underline{0.10} & \underline{0.22} & \underline{0.22} & \underline{0.22} & \underline{0.22} & \underline{0.22} & \underline{0.16} & \underline{0.16} & \underline{0.16} & \underline{0.17} & \underline{0.17} \\
\midrule
\rowcolor{blue!8} & GPT-5.6 Sol & \underline{0.16} & 0.15 & 0.16 & 0.16 & 0.16 & 0.24 & 0.24 & 0.24 & 0.24 & 0.25 & 0.22 & 0.21 & 0.22 & 0.24 & 0.23 \\
\rowcolor{blue!8} \multirow{-2}{*}{\textbf{CRG (ours)}} & Qwen-3.8 27B & \textbf{0.09} & \textbf{0.08} & \textbf{0.09} & \textbf{0.09} & \textbf{0.09} & \textbf{0.13} & \textbf{0.12} & \textbf{0.13} & \textbf{0.13} & \textbf{0.13} & \textbf{0.10} & \textbf{0.11} & \textbf{0.11} & \textbf{0.15} & \textbf{0.14} \\
\bottomrule
\end{tabular}
}
\end{table*}

\subsection{Maximum Refinement Depth}
\label{app:depth}

We define the refinement depth of a claim as the number of refinement edges
on the path from the root to that claim, and constrain the maximum
root-to-leaf refinement depth to $k$.
We use a single value, $k=5$, for every benchmark rather than tuning
refinement depth separately for each setting.

To examine whether this limit constrains graph construction, we vary
$k \in \{3,4,5,6,7,8\}$ on the development set
and measure the maximum root-to-leaf refinement depth of the
graphs that are actually produced.
As shown in \autoref{fig:depth}, the observed maximum depth increases from
3 to 4 to 5 as $k$ increases from 3 to 5, but remains at 5 when the allowed
maximum is increased further to 6, 7, or 8.
The 95th-percentile graph depth reaches 4 at $k=4$ and does not increase for
larger values of $k$.
Thus, $k=5$ permits the occasional depth-5 graph while larger depth budgets
do not lead the constructor to produce deeper graphs.
We therefore use $k=5$ throughout our experiments.

\begin{figure}[h]
\centering
\includegraphics[width=0.7\linewidth]{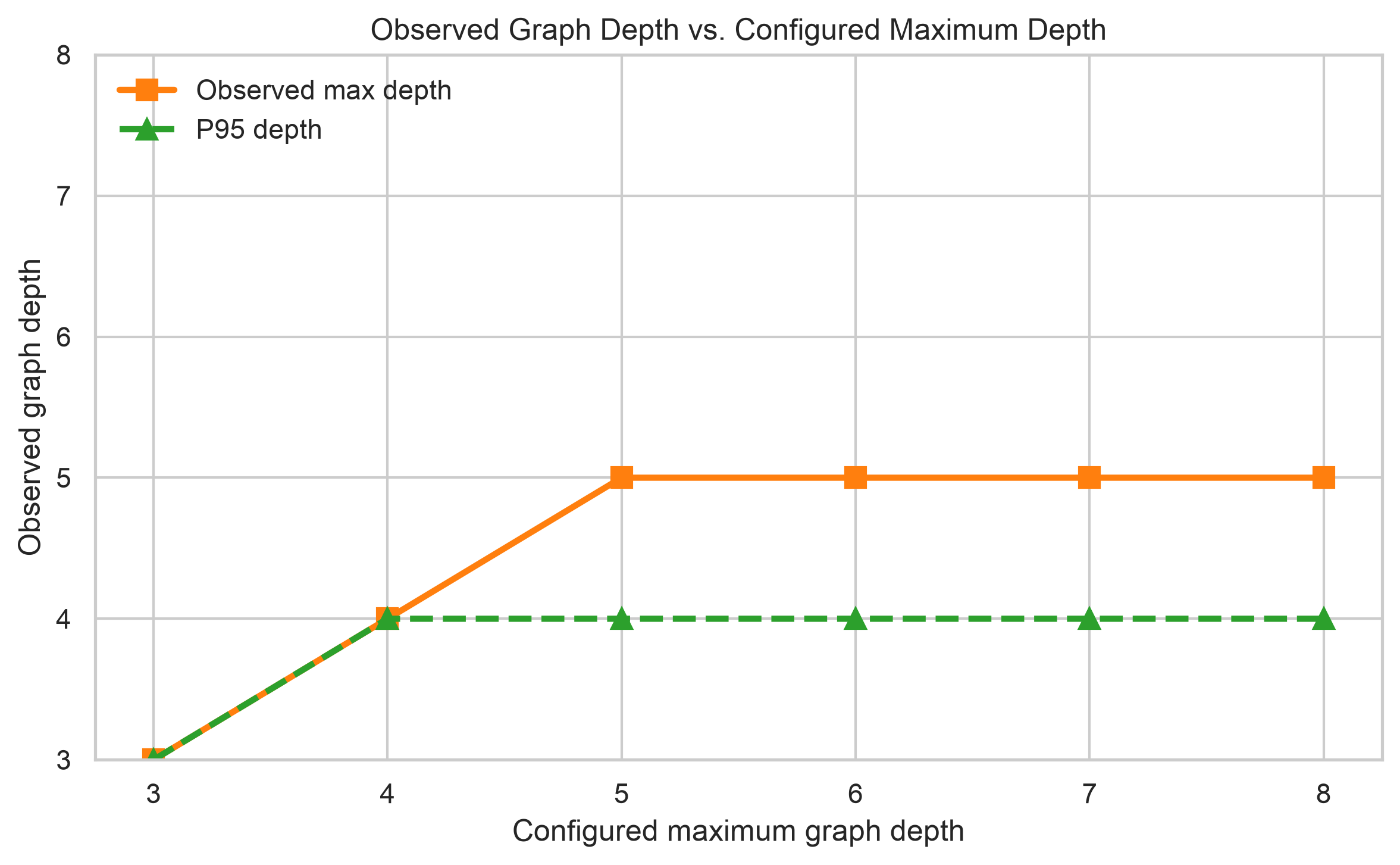}
\caption{Observed refinement depth vs.\ configured maximum refinement depth $k$
on development data. Maximum and 95th-percentile observed depth stop growing
at $k=5$ and $k=4$, so we choose $k=5$ throughout our main experiments.}
\label{fig:depth}
\end{figure}

\subsection{Estimator Reasoning Effort}
\label{app:reasoning-effort}

We additionally test the sensitivity of CRG estimation to the reasoning
effort allocated to Qwen-3.8 27B in both graph construction and leaf
confidence estimation.
Across \emph{low}, \emph{medium}, and \emph{xhigh} reasoning effort,
predictive performance is nearly unchanged:
reported ECE is $0.08$, $0.09$, and $0.08$ respectively;
Brier score is $0.19$ at all three settings; and AUROC is
$0.78$, $0.77$, and $0.79$ (\autoref{tab:reasoning-effort}).
In contrast, the reported generated-token count increases from approximately
5.3K at low effort to 6.1K at medium effort and 10.4K at xhigh effort.
We therefore use \emph{medium} reasoning effort in the main experiments,
which avoids the substantially higher inference cost of xhigh reasoning
as we observe no corresponding improvement in confidence quality.

\begin{table}[h]
    \centering
    \caption{Sensitivity of the Qwen-3.8 27B CRG estimator
    to reasoning effort in both graph construction and leaf scoring.}
    \label{tab:reasoning-effort}
    \begin{tabular}{lrrrr}
        \toprule
        Effort & ECE & Brier & AUROC & Generated tokens \\
        \midrule
        Low & 0.08 & 0.19 & 0.78 & 5.3K \\
        Medium & 0.09 & 0.19 & 0.77 & 6.1K \\
        Xhigh & 0.08 & 0.19 & 0.79 & 10.4K \\
        \bottomrule
    \end{tabular}
\end{table}

\subsection{Surrogate Log-Probability Aggregations}
\label{app:surrogate-aggregations}

Following \citet{bouchard_beyond_2026}, we compare several ways to aggregate action token probabilities from the surrogate LLM to form a confidence estimate. Let action $a_t=(w_{t,1},\ldots,w_{t,n_t})$ contain $n_t$ tokens, and let $h_t$ denote the reconstructed LLM context preceding that action. 
Note that for many proprietary frontier models, this reconstructed LLM context excludes the reasoning tokens present in the original LLM context, which are not retrievable.
The surrogate assigns each \textit{action step} the log probability
\begin{equation}
    \ell_t
    =
    \sum_{j=1}^{n_t}
    \log p_{\mathrm{surr}}(w_{t,j}\mid h_t,w_{t,<j}).
\end{equation}
From this quantity, we define the sequence probability (SP) and length-normalized sequence probability (LNSP) of an action step, respectively, as
\begin{equation}
    s_t = \exp(\ell_t),
    \qquad
    r_t = \exp\!\left(\frac{\ell_t}{n_t}\right).
\end{equation}

Sequence probability (SP) multiplies the token probabilities within an action. Length-normalized sequence probability (LNSP) takes their geometric mean, controlling for action length.

For a trajectory $\agenttraj$ containing $T$ scored actions, we evaluate the following trajectory-level confidence estimates:
\begin{itemize}
    \item \textbf{LNSP (first)} and \textbf{LNSP (last)} return $r_1$ and
    $r_T$, the LNSP of the first and last actions.
    \item \textbf{LNSP (mean)} returns the arithmetic mean
    $T^{-1}\sum_{t=1}^{T}r_t$.
    \item \textbf{LNSP (min)} returns the minimum: $\min_{1\leq t\leq T}r_t$.
    \item \textbf{SP (first)} and \textbf{SP (last)} return $s_1$ and
    $s_T$, the unnormalized sequence probabilities of the first and last actions, respectively.
    \item \textbf{SP (mean)} computes each action's
non-length-normalized sequence probability and averages across actions:
$T^{-1}\sum_{t=1}^{T} s_t$.
    \item \textbf{SP (min)} again returns the minimum:
    $\min_{1\leq t\leq T}s_t$.
\end{itemize}
The Surrogate LNSP baseline in the main experiments (\autoref{tab:main-results}) is \textbf{LNSP (last)}.
All variants use Qwen-3.8 27B as the surrogate LLM and reconstruct its context
at each scored agent action.

\begin{table*}[t]
\centering
\caption{Surrogate log-probability aggregation results on OpenHands trajectories, averaged over agent LLMs. We report adaptive ECE (10 bins), Brier Score, AUROC, BAS, mean confidence, and confidence standard deviation. Best per ECE, Brier, AUROC, and BAS column in \textbf{bold}, second \underline{underlined}. $^\dagger$White-box: uses token probabilities from a surrogate LLM.}
\label{tab:surrogate-aggregation-results}
\setlength{\tabcolsep}{3pt}
\resizebox{\textwidth}{!}{%
\begin{tabular}{ll cccccc cccccc cccccc}
\toprule
& & \multicolumn{6}{c}{SWE-Bench Verified} & \multicolumn{6}{c}{EnterpriseOps-Gym} & \multicolumn{6}{c}{SkillsBench} \\
\cmidrule(lr){3-8} \cmidrule(lr){9-14} \cmidrule(lr){15-20}
Estimator & LLM & ECE$\downarrow$ & Brier$\downarrow$ & AUROC$\uparrow$ & BAS$\uparrow$ & Mean Conf. & Std. Conf. & ECE$\downarrow$ & Brier$\downarrow$ & AUROC$\uparrow$ & BAS$\uparrow$ & Mean Conf. & Std. Conf. & ECE$\downarrow$ & Brier$\downarrow$ & AUROC$\uparrow$ & BAS$\uparrow$ & Mean Conf. & Std. Conf. \\
\midrule
Action LNSP (first)$^\dagger$ & Qwen-3.8 27B & 0.30 & 0.30 & 0.48 & 0.28 & 0.43 & 0.08 & \underline{0.22} & 0.31 & 0.49 & $-$0.02 & 0.61 & 0.15 & \underline{0.14} & 0.27 & 0.52 & \underline{0.13} & 0.41 & 0.11 \\
Action LNSP (last)$^\dagger$ & Qwen-3.8 27B & \underline{0.10} & \underline{0.21} & 0.47 & \underline{0.36} & 0.69 & 0.07 & \underline{0.22} & 0.28 & 0.61 & 0.04 & 0.61 & 0.09 & 0.16 & 0.28 & 0.46 & 0.09 & 0.57 & 0.13 \\
Action LNSP (mean)$^\dagger$ & Qwen-3.8 27B & \underline{0.10} & \textbf{0.20} & 0.48 & \textbf{0.37} & 0.79 & 0.04 & 0.35 & 0.36 & 0.60 & $-$0.09 & 0.74 & 0.09 & 0.23 & 0.30 & 0.48 & 0.07 & 0.72 & 0.04 \\
Action LNSP (min)$^\dagger$ & Qwen-3.8 27B & 0.34 & 0.32 & 0.51 & 0.26 & 0.39 & 0.08 & \textbf{0.13} & \underline{0.26} & 0.56 & \underline{0.07} & 0.53 & 0.12 & 0.15 & \underline{0.26} & \underline{0.58} & \textbf{0.14} & 0.37 & 0.09 \\
Action SP (first)$^\dagger$ & Qwen-3.8 27B & 0.73 & 0.73 & 0.49 & 0.00 & 0.00 & 0.00 & 0.39 & 0.39 & 0.58 & 0.00 & 0.00 & 0.00 & 0.49 & 0.49 & 0.53 & 0.00 & 0.00 & 0.00 \\
Action SP (last)$^\dagger$ & Qwen-3.8 27B & 0.73 & 0.73 & 0.47 & 0.00 & 0.00 & 0.00 & 0.39 & 0.39 & \underline{0.62} & 0.00 & 0.00 & 0.00 & 0.49 & 0.49 & 0.51 & 0.00 & 0.00 & 0.00 \\
Action SP (mean)$^\dagger$ & Qwen-3.8 27B & 0.73 & 0.72 & 0.48 & 0.01 & 0.01 & 0.01 & 0.39 & 0.39 & 0.58 & 0.00 & 0.01 & 0.02 & 0.49 & 0.49 & 0.46 & 0.00 & 0.00 & 0.01 \\
Action SP (min)$^\dagger$ & Qwen-3.8 27B & 0.73 & 0.73 & 0.54 & 0.00 & 0.00 & 0.00 & 0.39 & 0.39 & 0.61 & 0.00 & 0.00 & 0.00 & 0.49 & 0.49 & 0.56 & 0.00 & 0.00 & 0.00 \\
\midrule
\rowcolor{blue!8}  & GPT-5.6 Sol & 0.16 & 0.23 & \textbf{0.62} & \underline{0.36} & 0.59 & 0.25 & 0.24 & 0.29 & \textbf{0.67} & 0.06 & 0.15 & 0.20 & 0.22 & 0.29 & \textbf{0.65} & 0.09 & 0.40 & 0.34 \\
\rowcolor{blue!8} \multirow{-2}{*}{\textbf{CRG (ours)}} & Qwen-3.8 27B & \textbf{0.09} & \textbf{0.20} & \underline{0.57} & \textbf{0.37} & 0.70 & 0.16 & \textbf{0.13} & \textbf{0.24} & \underline{0.62} & \textbf{0.08} & 0.39 & 0.24 & \textbf{0.11} & \textbf{0.24} & \textbf{0.65} & \textbf{0.14} & 0.48 & 0.24 \\
\bottomrule
\end{tabular}
}
\end{table*}

\paragraph{Findings.} \autoref{tab:surrogate-aggregation-results} presents our complete comparison with surrogate methods.
\methodnameshort{}s remain the strongest overall method, achieving the best ECE, Brier score, AUROC, and BAS across benchmarks.
For the tasks we evaluated, sequence probability (SP) aggregation provided nearly useless estimates: harder agentic tasks yield longer actions which were at least somewhat uncertain in the surrogate model and earned confidence estimates consistently near zero.
For length-normalized approaches (LNSP), we find that the ranking among surrogate methods varies across benchmarks.
\textbf{Action LNSP (last)} is most consistent, performing best among surrogates on SWE-Bench Verified, and competitively elsewhere. 
While \textbf{Action LNSP (min)} performs best on EnterpriseOps-Gym, it performs substantially worse on SWE-Bench Verified (ECE 0.34 vs. 0.10 for Action LNSP (last)). \textbf{Action LNSP (first)} is best on SkillsBench, but inconsistent elsewhere.
Finally, we find that all surrogate methods perform poorly as discriminators, matching our findings in \autoref{sec:results}.

\subsection{Aggregation Rules}
\label{app:aggregation}

Our default aggregation computes root confidence as the product of the
leaf-claim confidences.
If the leaf claims form an exact conjunction and are conditionally independent
given the observed trajectory, this product is equal to their joint
probability.
In practice, conditional independence cannot be assumed: dependencies
between leaf claims may remain even after decomposition.
We therefore view the product as a simple approximation induced by the
conjunctive structure, and empirically compare it with alternative aggregation
rules.
The aggregation methods we evaluate are as follows:
Arithmetic and geometric means do not preserve the conjunctive semantics:
high confidence in most leaves can compensate for low confidence in a required
condition, and their output need not decrease as additional necessary
conditions are introduced.
The maximum is more permissive still, retaining only the strongest leaf.
In the two geometric-mean variants, one computes a geometric mean at each
internal goal from its direct children (propagating), whereas the other computes a single
geometric mean over all goal leaves (leaves).
When dependence among the leaf claims is unknown, their marginal confidences
identify only the Fr\'echet bounds on the probability of their conjunction:
\[
\max\!\left\{0,\sum_{G\in\leafset}c_G-(|\leafset|-1)\right\}
\;\leq\;
P\!\left(\bigcap_{G\in\leafset}G\,\middle|\,\agenttraj\right)
\;\leq\;
\min_{G\in\leafset}c_G.
\]
The union-bound and minimum rules take the lower and upper endpoints,
respectively. These bounds permit arbitrary dependence but yield only extreme
estimates; the product gives a point estimate by approximating the leaf claims
as conditionally independent.

We also compare against the probabilistic tree model of
\citet{hou_probabilistic_2025}.
Their learned emission probabilities use held-out truth-labeled data, however,
and therefore fall outside our training-free setting; \textbf{we include this variant
as a diagnostic comparison rather than as a directly admissible baseline.}

In the aggregation ablation (\autoref{tab:aggregation-rules}), the
goal-leaf product has the lowest ECE and Brier score among the compared
rules for EnterpriseOps-Gym by far. It also has the best BAS, and strong discrimination. On SWE-Smith, the learned emissions of \cite{hou_probabilistic_2025} are comparably calibrated, though this does not generalize to the other benchmark. The lower Fr\'echet bound also performs well, but the product remains stronger overall, supporting both conjunctive aggregation and our conditional-independence approximation.

\begin{table*}[t]
\centering
\caption{Comparing rules for combining leaf confidences into root confidence on development data (Qwen-3.8 27B). All rows use the same graphs, so differences isolate the aggregation rule. ECE denotes adaptive ECE with 10 bins.}
\label{tab:aggregation-rules}
\setlength{\tabcolsep}{3pt}
\resizebox{\textwidth}{!}{%
\begin{tabular}{lcccc cccc}
\toprule
& \multicolumn{4}{c}{EnterpriseOps-Gym ($n=192$)} & \multicolumn{4}{c}{SWE-Smith ($n=187$)} \\
\cmidrule(lr){2-5} \cmidrule(lr){6-9}
Aggregation rule & ECE$\downarrow$ & Brier$\downarrow$ & AUROC$\uparrow$ & BAS$\uparrow$ & ECE$\downarrow$ & Brier$\downarrow$ & AUROC$\uparrow$ & BAS$\uparrow$ \\
\midrule
Arithmetic mean                     & 0.51 & 0.47 & 0.71 & $-$0.43 & 0.25 & 0.28 & \textbf{0.57} & 0.08 \\
Geometric mean (propagating)        & 0.46 & 0.42 & 0.71 & $-$0.35 & 0.25 & 0.28 & \textbf{0.57} & 0.08 \\
Geometric mean (leaves)             & 0.45 & 0.41 & \textbf{0.72} & $-$0.33 & 0.25 & 0.28 & \textbf{0.57} & 0.08 \\
Maximum                             & 0.59 & 0.58 & 0.51 & $-$1.10 & 0.29 & 0.30 & 0.54 & $-$0.10 \\
Upper Fr\'echet bound (minimum)     & 0.28 & 0.29 & \textbf{0.72} & $-$0.07 & 0.20 & 0.26 & 0.56 & 0.17 \\
Lower Fr\'echet bound (union bound) & 0.22 & 0.25 & 0.64 & 0.06 & 0.14 & \textbf{0.22} & 0.55 & 0.29 \\
\citet{hou_probabilistic_2025}: \underline{learned emissions}       & 0.25 & 0.28 & 0.66 & 0.01 & \textbf{0.07} & \textbf{0.22} & \textbf{0.57} & \textbf{0.31} \\
\rowcolor{blue!8} Product (ours)    & \textbf{0.13} & \textbf{0.22} & 0.67 & \textbf{0.08} & 0.11 & \textbf{0.22} & 0.55 & 0.29 \\
\bottomrule
\end{tabular}%
}
\end{table*}

\subsection{Temperature-Scaled Baselines}
\label{app:temperature-scaling}

Our target setting is training-free and therefore does not permit fitting
post-hoc calibration parameters on labeled trajectories. Nevertheless, we evaluate whether our gains persist after post-hoc calibration with temperature scaling \citep{guo_onconfcal_2017}.

For a confidence estimate $c \in (0,1)$, binary temperature scaling
applies a scalar temperature $T>0$ in logit space,
\begin{equation}
    \widetilde{c}_{T}
    =
    \sigma\!\left(
        \frac{\operatorname{logit}(c)}{T}
    \right),
    \label{eq:temperature-scaling}
\end{equation}
where $\sigma$ denotes the logistic sigmoid. Following \citet{guo_onconfcal_2017}, we choose a temperature for each method by minimizing the negative-log likelihood (NLL) on the development set defined in \autoref{app:datasets}, choosing a maximum temperature of $T=15$ if the NLL has no finite minimum.

In \autoref{tab:temperature-scaling}, CRG remains the best-calibrated method after temperature scaling and achieves the highest BAS.

\begin{table*}[t]
\centering
\caption{Confidence estimation results on OpenHands trajectories, averaged over agent LLMs, after applying post-hoc temperature scaling. We tune the temperature parameter on the development set according to the procedure in \citet{guo_onconfcal_2017}. We report adaptive ECE (10 bins), Brier score, AUROC, and BAS. Best per column in \textbf{bold}, second \underline{underlined}. $^\dagger$White-box: uses token probabilities from a surrogate LLM.}
\label{tab:temperature-scaling}
\setlength{\tabcolsep}{3pt}
\resizebox{\textwidth}{!}{%
\begin{tabular}{ll cccc cccc cccc}
\toprule
& & \multicolumn{4}{c}{SWE-Bench Verified} & \multicolumn{4}{c}{EnterpriseOps-Gym} & \multicolumn{4}{c}{SkillsBench} \\
\cmidrule(lr){3-6} \cmidrule(lr){7-10} \cmidrule(lr){11-14}
Estimator & LLM & ECE$\downarrow$ & Brier$\downarrow$ & AUROC$\uparrow$ & BAS$\uparrow$ & ECE$\downarrow$ & Brier$\downarrow$ & AUROC$\uparrow$ & BAS$\uparrow$ & ECE$\downarrow$ & Brier$\downarrow$ & AUROC$\uparrow$ & BAS$\uparrow$ \\
\midrule
Basic Verbalizer + Temp. Scaling & Qwen-3.8 27B & 0.19 & \underline{0.22} & \textbf{0.66} & 0.34 & 0.16 & 0.26 & 0.60 & 0.07 & 0.14 & \underline{0.25} & \textbf{0.65} & \underline{0.15} \\
Reason-as-Graph + Temp. Scaling & Qwen-3.8 27B & \underline{0.16} & \textbf{0.21} & \textbf{0.66} & \underline{0.35} & 0.18 & 0.26 & \textbf{0.62} & 0.06 & 0.14 & \underline{0.25} & \underline{0.63} & $-$0.01 \\
Surrogate LNSP$^\dagger$ + Temp. Scaling & Qwen-3.8 27B & 0.22 & 0.24 & 0.47 & 0.32 & \underline{0.12} & \underline{0.25} & \underline{0.61} & \underline{0.08} & \underline{0.10} & \underline{0.25} & 0.46 & 0.14 \\
\midrule
\rowcolor{blue!8} \textbf{CRG (ours)} + Temp. Scaling & Qwen-3.8 27B & \textbf{0.13} & \textbf{0.21} & \underline{0.57} & \textbf{0.36} & \textbf{0.07} & \textbf{0.23} & \textbf{0.62} & \textbf{0.10} & \textbf{0.06} & \textbf{0.23} & \textbf{0.65} & \textbf{0.16} \\
\bottomrule
\end{tabular}
}
\end{table*}

\subsection{Graph Quality Analysis}
\label{sec:analysis-graph-quality}

Our method assumes a decomposition structure that an LLM may not produce: each decomposition should yield conditionally independent children whose conjunction is equivalent to their parent, and each particularization should preserve its parent's meaning.
To evaluate whether this assumption holds, we test the implied entailments, following evaluations of claim decompositions \citep{weir-etal-2024-enhancing, huang2025decmetricsstructuredclaimdecomposition} and of assurance case structure \citep{ikhwantri_explainable_2025}.
For a parent claim $G$ with children $G_1,\ldots,G_m$, and a particularized restatement $G'$ of $G$, we check four criteria:
\begin{itemize}
    \item \textbf{Sufficiency}: the conjunction of the children covers their parent, $\bigwedge_{j=1}^{m}G_j \rightarrow G$.
    \item \textbf{Necessity}: each child is required, $G \rightarrow G_j$ for every $j$.
    \item \textbf{Particularization}: a restatement and its original claim entail one another in context, $G' \rightarrow G$ and $G \rightarrow G'$.
    \item \textbf{Non-redundancy}: no sibling entails another, $G_j \not\models G_k$ and $G_k \not\models G_j$ for every $j \neq k$.
\end{itemize}

It can be shown from the implications above that, should all sufficiency, necessity, and particularization entailments hold, the leaves form the conjunction required by \autoref{eq:root-equiv}.
Non-redundancy serves as a diagnostic for violations of the conditional-independence assumption, but does not establish independence.
We evaluate 50 sampled graphs per benchmark and \methodnameshort{} setting (300 total).
Within each graph, we check all sufficiency, necessity, and particularization entailments, and sample at most 20 sibling pairs for non-redundancy to bound evaluation cost, since a parent with $m$ children induces $\binom{m}{2}$ pairwise comparisons.
A particularization holds only if both entailments hold, and a sibling pair is non-redundant only if neither claim entails or contradicts the other.
We score each graph as the fraction of checks that hold for each criterion and macro-average across graphs, so that a graph's contribution does not depend on its size.
We use the same sampled graphs and checks with three LLM entailment judges: our two CRG setting models (GPT-5.6 Sol, Qwen-3.8 27B), and Gemini-3.6 Flash \citep{google2026gemini36flash}.
\autoref{tab:graph-quality} shows the \methodnameshort{}s satisfy sufficiency and non-redundancy more consistently than necessity and particularization, suggesting that precise claim decomposition and contextualization remains difficult for LLMs.
Judges unanimously find \methodnameshort{}s with GPT 5.6 Sol to better maintain the refinement invariant our method assumes (\autoref{eq:refinement-invariant}), yet also unanimously find GPT 5.6 graphs to contain more overlapping sibling claims.
Thus, neither \methodnameshort{} setting achieves near perfect decomposition.
In \autoref{app:analysis-case-studies}, we evaluate case studies, including an example with a clear decomposition failure.
Nevertheless, these imperfect graphs produce the calibration and decision-utility gains in \autoref{tab:main-results}, indicating that perfect decomposition is not required for useful confidence estimates.

\begin{table*}[t]
\centering
\caption{Macro-averaged graph-quality scores by benchmark, \methodnameshort{} LLM, and judge.}
\label{tab:graph-quality}
\setlength{\tabcolsep}{2.5pt}
\resizebox{\textwidth}{!}{%
\begin{tabular}{ll cccc cccc cccc}
\toprule
& & \multicolumn{4}{c}{SWE-Bench Verified} & \multicolumn{4}{c}{EnterpriseOps-Gym} & \multicolumn{4}{c}{SkillsBench} \\
\cmidrule(lr){3-6} \cmidrule(lr){7-10} \cmidrule(lr){11-14}
CRG LLM & Judge
& Suff. & Nec. & Partic. & Non-red.
& Suff. & Nec. & Partic. & Non-red.
& Suff. & Nec. & Partic. & Non-red. \\
\midrule
\multirow{3}{*}{GPT-5.6 Sol} & GPT-5.6 Sol
& \textbf{0.98} & \textbf{0.79} & \textbf{0.75} & 0.82
& \textbf{0.96} & \textbf{0.77} & \textbf{0.65} & 0.95
& \textbf{0.90} & \textbf{0.86} & \textbf{0.69} & \textbf{0.84} \\
 & Qwen-3.8 27B
& \textbf{0.92} & \textbf{0.71} & \textbf{0.41} & 0.88
& \textbf{0.85} & \textbf{0.65} & \textbf{0.37} & 0.96
& \textbf{0.85} & \textbf{0.71} & \textbf{0.53} & \textbf{0.86} \\
 & Gemini-3.6 Flash
& \textbf{0.99} & \textbf{0.80} & \textbf{0.69} & 0.81
& \textbf{0.98} & \textbf{0.80} & \textbf{0.74} & 0.93
& \textbf{0.92} & \textbf{0.87} & \textbf{0.75} & \textbf{0.81} \\
\midrule
\multirow{3}{*}{Qwen-3.8 27B} & GPT-5.6 Sol
& 0.79 & 0.60 & 0.51 & \textbf{0.87}
& 0.92 & 0.59 & 0.30 & \textbf{0.96}
& 0.75 & 0.80 & 0.56 & 0.82 \\
 & Qwen-3.8 27B
& 0.67 & 0.38 & 0.29 & \textbf{0.91}
& 0.82 & 0.42 & 0.14 & \textbf{0.97}
& 0.68 & 0.61 & 0.38 & 0.81 \\
 & Gemini-3.6 Flash
& 0.82 & 0.60 & 0.54 & \textbf{0.83}
& 0.95 & 0.64 & 0.48 & \textbf{0.96}
& 0.79 & 0.77 & 0.59 & 0.77 \\
\bottomrule
\end{tabular}
}
\end{table*}

\subsection{Prompts}
\label{app:prompts}

We reproduce the prompts used for CRG construction, evidence gathering,
leaf-confidence estimation, and the prompted baselines. Angle-bracketed text
denotes content inserted at runtime. Task statements and trajectories are
represented by placeholders rather than repeated in full. %

\subsubsection{Domain-Specific Success Definitions}

To support estimating confidence in agent success, we found it important to precisely define what success \textit{means} for each task domain.
We think such a definition could be useful for any system which evaluate success through LLM reasoning:
it enables an estimate to more precisely cover a user's desiderata for a task, rather than relying on potentially incorrect assumptions an LLM might make about a domain.
For example, when using a coding agent, one might or might not want to include writing ``clean, well-documented code'' as a criterion for success, depending on the context.
In either case, the confidence estimate for success should reflect exactly what it means to be successful in that domain or setting.
The following domain-specific definitions are inserted into \textbf{both CRG construction \textit{and} the verbalized baseline prompts.}

\paragraph{Software Engineering (SWE-Smith \& SWE-Bench).}

\begin{Prompt}
## The Agent's Success Criteria

A SWE agent has successfully accomplished its user's task if and only if both
of the following hold:

1. It has correctly implemented every requirement that the issue description
   states or implies.
2. It has introduced no regressions into the repository.
\end{Prompt}

\paragraph{Enterprise Ops.}
\begin{Prompt}
## The Agent's Success Criteria

An enterprise agent has successfully accomplished its user's task if and only
if all of the following hold:

1. The user's primary task objective was fully achieved.
2. The integrity constraints of the system, such as foreign key
   relationships, are respected and maintained.
3. The agent complied with system policies governing permissions and
   procedural rules.
\end{Prompt}

\paragraph{Skills Bench.}
\begin{Prompt}
## The Agent's Success Criteria

An agent has successfully accomplished its user's task if and only if all of
the following hold:

1. All the requirements in the user's stated task have been achieved.
2. All required artifacts have been produced correctly.
\end{Prompt}

\subsubsection{CRG Construction}
\label{app:crg_construction}

Graph construction uses one system message followed by a user message for
claim decomposition. Evidence gathering is then requested in the same
conversation, so the constructor retains the trajectory, the constructed
claim tree, and the preceding tool results.

\paragraph{Constructor system message.}
\begin{Prompt}
You are a confidence-grounded assurance case graph constructor. Your task is
to construct a structured, graph-based argument which decomposes and evaluates
the claim that an agent successfully accomplished its task, based on a
completed agent trajectory. The graph is used to estimate confidence that the
agent's final delivered result substantively satisfies the user's task.

## What is a Confidence-Grounded Assurance Graph?

A confidence-grounded assurance graph is a structured argument that the
agent's final result satisfies the overall claim -- built by decomposing that
claim into a tree of necessary-and-sufficient success conditions until each
can be directly evaluated with trajectory evidence. The graph is rooted in a
top-level claim that the agent's final result satisfies the user's task. This
claim is recursively developed into sub-claims, until each leaf claim is
atomic and can be directly evaluated with trajectory evidence.

Thus, the graph has two node types: goal nodes, which contain an auditable
claim or sub-claim in the argument tree, and evidence nodes, which cite
concrete, specific, trajectory-grounded observations that evaluate a claim.

### Goal Nodes

The root claim and every sub-goal derived from it.

- Each claim is a success condition: a falsifiable criterion that the agent's
  final result must satisfy for its parent claim to hold.
- Claims may be abstract (task-agnostic) or concrete (task-aware).
- Use divide-and-conquer to decompose a claim into children at approximately
  the same level of concreteness.
- For every parent claim, its children must be individually necessary and
  collectively sufficient. The parent is true when all of its children are
  true, and if any child is false, the parent is false.
- Use particularization to restate one abstract claim as one concrete claim
  using details from the specific task. The particularized claim must neither
  weaken nor strengthen its parent; in the task context, both must have the
  same truth condition.
- Describe what must be true of the final result, not how the agent tried to
  achieve it. Different correct implementations or solutions should satisfy
  the same claims.
- Do not use optional actions, implementation choices, or verification steps
  as success conditions -- they may instead be evidence.
- An atomic claim is a single, independently verifiable claim with enough
  context to assess directly against evidence without referring to its parent.
- Stop when every leaf is atomic. Split claims requiring separate evidence;
  do not split attributes, examples, or checks that can be evaluated together
  from the same evidence.

### Evidence Nodes

Concrete, trajectory-grounded observations that bear on one or more claims
through a permitted evidence relationship.

- A claim may need several pieces of evidence to be fully assessed; reuse the
  same evidence node across multiple claims when one observation is relevant
  to more than one.
- Missing evidence creates uncertainty, not falsity. You can use evidence
  notes to state the absence of a particular action or observation in the
  trajectory which you expect to need to fully evaluate a claim.

### Edges

Edges are labeled with one of the following relationship types:

- supports: increases confidence in the target claim
- undermines: decreases confidence in the target claim
- unverified: identifies missing verification or evidence, without indicating
  falsehood
- decomposes_from: is a conjunct in a decomposition of the target goal
- particularizes: restates the target goal concretely for the specific task
  context

## Process

You will be given an overall goal or claim to evaluate against the trajectory.
Build the graph in two phases:

1. Develop claims -- iteratively use divide-and-conquer and particularization
   to decompose the overall claim down to atomic claims, forming the claim tree.
2. Gather evidence -- add evidence nodes to the graph to cite trajectory
   evidence that relates meaningfully to each atomic claim, or note where
   required evidence is missing.

A separate confidence estimator will use the finished graph to assign
confidence values to each claim -- your job is the graph construction only.

<DOMAIN-SPECIFIC SUCCESS CRITERIA>

## Your Success Criteria

The finished graph should let a downstream evaluator assign confidence to the
overall claim using only what is in the graph:

- The claim tree begins with an abstract decomposition of the overall claim,
  considering sub-claims that are relevant to the whole domain, not this
  individual user's task or environment.
- Every divide-and-conquer parent is decomposed into child success conditions
  that are individually necessary and collectively sufficient to evaluate the
  parent claim.
- Every particularized child is a concrete, task-aware restatement with the
  same truth condition as its abstract parent.
- Every atomic claim describes a final result or success condition, not the
  observed path taken to reach it.
- Every atomic claim is evaluated with evidence, or honestly marked uncertain
  where the trajectory does not address it.
- Nothing relevant is missing; nothing irrelevant is included.
\end{Prompt}

\paragraph{Claim-decomposition user message.}
The maximum root-to-leaf depth is instantiated as five in all reported runs.

\begin{Prompt}
<trajectory>
<AGENT TRAJECTORY>
</trajectory>

<users_task>
<USER TASK>
</users_task>

Construct the success-condition claim tree for:

- Overall Goal Identifier: Successful Final Outcome
- Overall Claim: The agent's final delivered result satisfies the substantive
  success criteria of the user's task

## Phase 1: Decompose and Particularize Claims

Iteratively decompose and particularize the Overall Claim until every leaf is
an atomic, independently verifiable claim. The goal tree may have at most 5
strategy edges on any root-to-leaf path. Use the following tools:

1. `divide_and_conquer` decomposes an abstract or concrete claim into two or
   more claims at approximately the same level of concreteness. For each
   decomposition:
   - Every child must be necessary for its parent.
   - All children together must be sufficient for their parent.
   - The parent must be true exactly when all its children are true.
2. `particularize` restates one abstract, task-agnostic claim as one concrete,
   task-aware claim. The new claim must neither weaken nor strengthen its
   parent; in the task context, both must have the same truth condition.
   Generally, an abstract claim should be particularized at most once.

### Guidance

- Divide broad abstract claims into smaller, still abstract claims before
  particularizing them. Particularize only when the claim is concrete enough
  to be evaluated in the context of the user's task.
- Do not jump from an abstract claim to concrete task details through
  divide-and-conquer. Maintain a consistent level of abstraction between
  parent and child claims. Use particularization to move from abstract to
  concrete.
- Apply the system prompt's atomicity rule: split claims requiring separate
  evidence, but keep attributes, examples, or checks together when the same
  evidence evaluates them.
- Keep enough context that each claim is clear and evaluable without reading
  its parent.
- Describe what must be true of the final result, not how the agent tried to
  achieve it. Different correct implementations or solutions should satisfy
  the same claims.
- Prefer strong, clear success-condition claims. Do not weaken or strengthen a
  claim when particularizing. Do not weaken claims to match the observed
  trajectory or turn actions, tests, or implementation details into success
  conditions.
- Use unique goal identifiers and each target goal's exact identifier and
  claim.
- Make as many independent strategy calls in each step as you can determine
  soundly, as long as they develop different claims within the graph. After
  observing the updated graph, continue developing any non-atomic leaves.
  When the complete graph has only atomic leaves, call `finish`.
- In general: prefer simple graphs.

Produce claim structure only in this phase. Do not cite or gather evidence yet.
\end{Prompt}

The constructor is given the following tool interfaces. Each generated
sub-goal contains a unique identifier, an auditable claim, and a rationale for
its inclusion.

\begin{Prompt}
divide_and_conquer(
    target_goal_identifier,
    target_auditable_claim,
    sub_goals=[
        {
            goal_identifier,
            auditable_claim,
            reasoning
        },
        ...
    ]
)

particularize(
    target_goal_identifier,
    target_auditable_claim,
    particularized_goal={
        goal_identifier,
        auditable_claim,
        reasoning
    }
)

finish(message)
\end{Prompt}

\paragraph{Evidence-gathering user message.}
This message is appended to the graph-construction conversation after the
constructor finishes the claim tree. Evidence is produced in one structured
tool call.

\begin{Prompt}
## Phase 2: Gather Evidence

For every atomic leaf claim in the preceding decomposition, gather the
concrete trajectory observations needed to assess it. Checking one claim may
use several pieces of evidence, and one piece of evidence may be relevant to
multiple claims.

- For each claim, include both supporting and undermining evidence if present.
- Reuse an evidence item when the same observation contributes to multiple
  claims.
- Cite the precise trajectory step numbers for every observed action or event.
- Do not infer success merely because no failure was observed.

If the trajectory lacks relevant evidence for a claim, create an evidence item
stating that evidence is absent, use an empty `step_numbers` list. Choose an
appropriate edge label. The absence may leave the claim uncertain; it does not
necessarily establish that the claim is false.

You have exactly 1 step to call `gather_evidence_for_agentic_graph_v1` and
record the evidence catalog and its typed edges.
\end{Prompt}

The evidence tool returns a reusable catalog and a set of typed attachments:

\begin{Prompt}
gather_evidence_for_agentic_graph_v1(
    evidence_catalog=[
        {
            evidence_key,
            evidence,
            step_numbers,
            auditable_claim,
            contribution
        },
        ...
    ],
    evidence_edges=[
        {
            evidence_key,
            target_goal_identifier,
            relationship_type:
                supports | undermines | unverified
        },
        ...
    ]
)
\end{Prompt}

\subsubsection{CRG Leaf-Confidence Estimation}

Each leaf goal is scored independently in a new LLM call. The trajectory is
reduced as described in \autoref{sec:methods-verbalizing}; evidence attached
to the target leaf is supplied in the related-claims block. Evidence nodes
themselves are not assigned separate confidence values in the reported
\texttt{goal\_leaves\_product} configuration.

\paragraph{Estimator system message.}
\begin{Prompt}
You estimate calibrated confidence for claims in a confidence-grounded
assurance graph. Such a graph is a structured argument that an agent
successfully completed its task: goal nodes decompose that overall claim into
necessary success conditions, and evidence nodes connect those claims to
concrete trajectory observations. Evaluate only the target claim using the
supplied trajectory and related claims. Return the requested structured
confidence and a brief rationale.

## Your Success Criteria

- You provide a clear, calibrated confidence that reflects both the available
  evidence for and against a claim, and the degree of uncertainty that remains
  in the absence of further evidence.
- You provide a clear rationale that matches your confidence score.
- Your confidence score is calibrated, such that when normalized to a
  probability, the score reflects the actual likelihood that the claim is
  true.
\end{Prompt}

\paragraph{Leaf-scoring user message.}
The related-claims entry is repeated for every evidence item attached to the
target leaf.

\begin{Prompt}
# Context

<overall_goal>
Goal: Successful Final Outcome
Claim: The agent's final delivered result satisfies the substantive success
criteria of the user's task
</overall_goal>

<problem_statement>
<USER TASK>
</problem_statement>

<trajectory>
<REDUCED AGENT TRAJECTORY>
</trajectory>

<related_claims>
1. Relationship to target: <supports | undermines | unverified>
   Related Claim: <EVIDENCE CLAIM>
   Evidence Steps: <COMMA-SEPARATED STEP NUMBERS>
   Contribution: <EVIDENCE CONTRIBUTION>

<ADDITIONAL ATTACHED EVIDENCE, IF ANY>
</related_claims>

# Task Instructions

Estimate whether the target claim is true and can soundly support its parent
claim. Judge this claim alone, not whether the overall task succeeded. Weigh
predecessor confidences as evidence rather than averaging them. In the absence
of evidence, consider the claim, any available trajectory information, and
your priors about this problem.

Think carefully about evidence relationships:

- if evidence suggests a claim is unverified, do not assume it is False.
  Instead, establish confidence considering your priors: how likely is it that
  this claim is true, even though it was not formally verified by the agent?
  Such claims may depend on world knowledge, or reasoning inferences about the
  problem which were not formally verified, but may nonetheless be true. Use
  confidence to reflect uncertainty, not certainty in falsehood. For
  reference, a uniform prior suggests a confidence of 50%

Confidence guidance:

- 80--100%
  the claim.
- 60--80%
  minor.
- 40--60%
- 20--40%
  false than true.
- 0--20%
  the claim, establishing its falsehood.

Return a confidence from 0%
a brief rationale.

If you are using a percentage scale, please round your confidence to the
nearest whole number between 0 and 100.

<target_goal>
Goal: <LEAF GOAL IDENTIFIER>
Claim: <LEAF CLAIM>
Reasoning: <RATIONALE FOR INCLUDING THE LEAF>
</target_goal>
\end{Prompt}

The response is constrained to the structured schema
\texttt{\{confidence: float, rationale: string\}}. The implementation rejects
scores outside $[0,100]$ and non-integer confidence values, then divides the
accepted value by $100$ before aggregation.

\subsubsection{Prompted Baselines}

All black-box baselines receive the following common prompt prefix. Their
method-specific instructions, given below, are appended after
\texttt{\# Task Instructions}.

\begin{Prompt}
You are evaluating whether an agent successfully completed a user's task.
Review the complete agent trajectory below, including its final answer and any
tool results.

# Context

<domain_success_criteria>
<DOMAIN-SPECIFIC SUCCESS CRITERIA>
</domain_success_criteria>

<problem_statement>
<USER TASK>
</problem_statement>

<agent_trajectory>
<COMPLETE AGENT TRAJECTORY>
</agent_trajectory>

# Task Instructions
<METHOD-SPECIFIC INSTRUCTIONS>
\end{Prompt}

\paragraph{Basic Verbalizer.}
The method-specific instruction is:

\begin{Prompt}
Reason about the correctness of the completed work, then end your response
with a confidence percentage on its own line in exactly this form:
"Confidence: N%
\end{Prompt}

\paragraph{Reason-as-Graph.}
This baseline uses the Basic Verbalizer output instruction followed by the
additional graph-oriented guidance below. It does not construct or return a
persistent graph.

\begin{Prompt}
Reason about the correctness of the completed work, then end your response
with a confidence percentage on its own line in exactly this form:
"Confidence: N%

## Guidance

We suggest using a confidence-grounded assurance case graph to structure your
reasoning.

A confidence-grounded assurance case graph is a structured argument for
whether a task was completed correctly. It starts with that overall claim:
"the submitted result is correct", breaks it into a complete set of smaller,
auditable claims, and evaluates each claim using concrete evidence from the
trajectory. Evidence may support or undermine a claim, or reveal uncertainty
where the trajectory is incomplete. Together, the claims and evidence should
account for every material reason the task may have succeeded or failed.

Use this structure as a scaffold for your confidence reasoning. Break
correctness into complete, checkable parts; assess each part against relevant
trajectory evidence; then combine those assessments into an overall
confidence. You do not need to output a graph.
\end{Prompt}

\paragraph{Verbal Consistency.}
For each trajectory, we independently sample ten responses to the common
prompt with the following instruction. Confidence is the fraction of valid
responses ending in \texttt{Successful: True}.

\begin{Prompt}
Reason about the correctness of the completed work, then end your response
with your answer on its own line in exactly this form:
"Successful: <answer>", where <answer> is either True or False.
\end{Prompt}

\paragraph{Surrogate LNSP.}
Surrogate LNSP does not use a natural-language confidence-estimation prompt.
It computes the length-normalized sequence probability of the agent's final
action under the surrogate LLM using the reconstructed agent context.

\subsubsection{Reason-with-Graph Ablation}

The Reason-with-Graph ablation in
\autoref{sec:analysis-component-ablation} receives the already constructed
CRG rendered as a nested textual assurance case. It directly verbalizes
confidence in the root rather than scoring and aggregating individual leaves.

\begin{Prompt}
You are evaluating whether an agent successfully completed a user's task
using a confidence-grounded assurance case.

The assurance case is a graph rooted at an overall goal. Goal nodes state
auditable claims and may be decomposed into subgoals. Evidence nodes contain
concrete observations. Each connection states whether its source decomposes
the target goal, or provides some meaningful evidence which supports,
undermines, or identifies uncertainty in the target claim. Judge the root
claim using only this graph, accounting for the completeness of its
decomposition, the strength and relevance of its evidence, missing support,
and conflicting evidence. Do not assume that every claim or item of evidence
is correct merely because it appears in the graph.

<assurance_case>
<RENDERED CONFIDENCE REASONING GRAPH>
</assurance_case>

Reason about the correctness of the completed work, then end your response
with a confidence percentage on its own line in exactly this form:
"Confidence: N%
\end{Prompt}

\subsection{Behavioral Alignment Score}
\label{app:bas}
\citet{wu2026bas} evaluate confidence by the quality of the answer-or-abstain
decisions it supports.
A decision-maker with risk tolerance $t \in [0,1)$ receives utility $1$ for
accepting a correct solution, $-\tfrac{t}{1-t}$ for accepting an incorrect one,
and $0$ for abstaining.
Given a confidence $\agentconf$, accepting has expected utility
$\agentconf - (1-\agentconf)\tfrac{t}{1-t}$, which is non-negative exactly when
$\agentconf \geq t$; the threshold $t$ is therefore the confidence level at
which a user with that risk tolerance becomes willing to accept.
BAS aggregates the realized utility of the induced policy over a uniform prior
on $t$.
Because the policy abstains for $t > \agentconf$, the integral runs only over
$[0, \agentconf]$ and admits a closed form:
\begin{equation}
    U(\agentconf, \agentsuccess) =
    \begin{cases}
        \agentconf, & \agentsuccess = 1, \\[2pt]
        \agentconf + \ln(1 - \agentconf), & \agentsuccess = 0,
    \end{cases}
    \qquad
    \mathrm{BAS} = \frac{1}{n} \sum_{i=1}^{n} U(\agentconf_i, \agentsuccess_i).
    \label{eq:bas}
\end{equation}
A correct solution thus contributes utility equal to the confidence assigned to
it, while an incorrect one is penalized logarithmically, so
$\mathrm{BAS} \in (-\infty, 1]$.
\citet{wu2026bas} further show that BAS is a proper decision-theoretic scoring
objective: expected utility is uniquely maximized when reported confidence
equals the true probability of correctness, so BAS rewards calibration rather
than ranking alone.

\paragraph{Overconfident errors.}
The term $\ln(1-\agentconf)$ diverges as $\agentconf \to 1$, so a single
incorrect trajectory assigned near-certain confidence can dominate the score.
Following \citet{wu2026bas} we clip confidences to
$\min(\agentconf, 1-\epsilon)$ with $\epsilon = 10^{-4}$, which bounds the
worst per-trajectory contribution at $\approx -8.21$.
This asymmetry is the mechanism behind the large negative BAS values we report
for verbalized and sampling-based baselines (\autoref{tab:main-results}): both
place near-maximal confidence on trajectories that failed, which ECE penalizes
only in proportion to the gap while BAS penalizes without bound.

\subsection{Theoretical Foundations of CRG Confidence Aggregation}
\label{app:detailed_proof}

Here we formalize the probabilistic justification for our
aggregation rule. We proceed in three steps. First, we show that if
each refinement preserves the intended conjunctive semantics, then the
root claim is equivalent to the conjunction of all leaf claims.
Second, we characterize the root probability without making any
independence assumption. Finally, we recover the product rule used by
our estimator as the special case in which leaf claims are conditionally
independent. 

Throughout, equivalence $\equiv$ is understood in the task context:
$G \equiv G'$ means that the two claims have the same truth value for
every execution under consideration. 
We identify each claim $G$ with the event that $G$ is true. Probabilities are epistemic and conditioned on the information available to the estimator from the observed trajectory $\tau$; the trajectory need not fully determine the claim's truth.

\subsubsection{Leaf Equivalence}
\label{app:proofs-leaf-equiv}

Recall the refinement invariant of
\autoref{sec:methods-graph-claim-decomp}: for every internal claim
$G \in \claimset \setminus \leafset$, the conjunction of its children
is intended to be equivalent to the parent,
\begin{equation}
    G \;\equiv\;
    \bigwedge_{G' \in \childfn(G)} G' .
    \tag{\ref{eq:refinement-invariant}}
\end{equation}
Decomposition realizes this with $|\childfn(G)|>1$, while
particularization realizes it with $|\childfn(G)|=1$.

For $G\in\claimset$, let $\leafset(G)$ denote the leaves of the
refinement subtree rooted at $G$, so that
$\leafset(G_0)=\leafset$.

\begin{lemma}[Leaf equivalence]
\label{lemma:leaf_equivalence}
Let $\crg$ be a \methodnameshort{} whose refinement structure satisfies
\autoref{eq:refinement-invariant} at every internal claim. Then, for
every $G\in\claimset$,
\begin{equation}
    G
    \;\equiv\;
    \bigwedge_{H\in\leafset(G)} H .
    \label{eq:subtree-equiv}
\end{equation}
In particular,
\begin{equation}
    G_0
    \;\equiv\;
    \bigwedge_{H\in\leafset} H .
    \label{eq:root-leaf-equiv}
\end{equation}
\end{lemma}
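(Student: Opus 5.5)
We argue by structural (strong) induction on the refinement tree, ordering claims by the height of their subtree, i.e., the maximum number of refinement edges from $G$ down to a leaf of $\leafset(G)$. Because the refinement depth is bounded by $k$ and each refinement adds finitely many children, the tree is finite. Every subtree height is therefore a well-defined natural number, so induction on height is legitimate. Throughout we use only that $\equiv$ (same truth value for every execution under consideration) is an equivalence relation compatible with conjunction. If $A_j\equiv B_j$ for each $j$, then $\bigwedge_j A_j\equiv\bigwedge_j B_j$. Conjunction over finite sets is also associative, commutative, and idempotent.

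\emph{Base case.} If $G\in\leafset$, then $\leafset(G)=\{G\}$, and \autoref{eq:subtree-equiv} reduces to $G\equiv G$, which is immediate.

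\emph{Inductive step.} Let $G$ be internal, and assume \autoref{eq:subtree-equiv} holds for every child $G'\in\childfn(G)$, since each child's subtree has strictly smaller height. By the refinement invariant \autoref{eq:refinement-invariant},
\[
G\;\equiv\;\bigwedge_{G'\in\childfn(G)} G' .
\]
Applying the inductive hypothesis to each conjunct and using compatibility of $\equiv$ with conjunction gives
\[
G\;\equiv\;\bigwedge_{G'\in\childfn(G)}\ \bigwedge_{H\in\leafset(G')} H .
\]
The leaf sets of the children partition the leaves of $G$'s subtree, that is, $\leafset(G)=\bigcup_{G'\in\childfn(G)}\leafset(G')$. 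This union is disjoint because the refinement structure is a tree: each non-root claim has exactly one parent. By associativity and commutativity, the iterated conjunction therefore flattens to $\bigwedge_{H\in\leafset(G)} H$, which completes the step. The argument is identical for particularization ($|\childfn(G)|=1$) and decomposition ($|\childfn(G)|>1$), since only the invariant is used. Specializing to $G=G_0$ with $\leafset(G_0)=\leafset$ yields \autoref{eq:root-leaf-equiv}.

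The only step needing care is the flattening, specifically the identity $\leafset(G)=\bigcup_{G'}\leafset(G')$ and the use of tree structure. If the claim structure were a DAG with shared sub-claims, the union would not be disjoint, but the conclusion would still hold because conjunction is idempotent. The other subtle point is the well-foundedness of the induction, which the finite depth bound $k$ secures. The independence assumptions from the decomposition step play no role here: the lemma is purely logical, and probabilistic assumptions enter only in the subsequent aggregation results.
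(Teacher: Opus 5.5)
Your proof is correct and is essentially the paper's argument: structural induction over the finite refinement tree, with well-foundedness coming from the depth bound $k$. It has the same trivial base case at leaves, substitutes the inductive hypothesis into the refinement invariant, and flattens the result because the leaves below $G$ are exactly the leaves below its children. Your added remarks on compatibility of $\equiv$ with conjunction and on disjointness or idempotence spell out what the paper leaves implicit, without changing the argument.
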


\begin{proof}
We proceed by structural induction on the subtree rooted at $G$.
The induction is well-founded because
\autoref{sec:methods-graph-claim-decomp} performs at most $k$
refinement rounds, so the refinement structure is a finite tree.

\emph{Base case.}
If $G\in\leafset$, then $\leafset(G)=\{G\}$, and
\autoref{eq:subtree-equiv} reduces to $G\equiv G$.

\emph{Inductive step.}
Let $G$ be an internal claim with
$\childfn(G)=\{G_1,\ldots,G_m\}$, and suppose
\autoref{eq:subtree-equiv} holds for each child $G_j$.
By the refinement invariant,
\[
    G \equiv \bigwedge_{j=1}^{m} G_j ,
\]
and by the inductive hypothesis,
\[
    G_j
    \equiv
    \bigwedge_{H\in\leafset(G_j)} H .
\]
Substituting these equivalent claims gives
\begin{align}
    G
    &\equiv
    \bigwedge_{j=1}^{m}
    \bigwedge_{H\in\leafset(G_j)} H \\
    &=
    \bigwedge_{H\in\leafset(G)} H ,
\end{align}
because the leaves below $G$ are precisely the leaves below its
children. Taking $G=G_0$ yields
\autoref{eq:root-leaf-equiv}.
\end{proof}

\begin{remark}[Logical overlap]
\label{rem:overlap}
Lemma~\ref{lemma:leaf_equivalence} does not require distinct leaf
claims to be logically independent. Distinct leaves may express
overlapping, correlated, or even equivalent conditions without
invalidating the logical equivalence in
\autoref{eq:root-leaf-equiv}; conjunction remains well-defined in
each case. Such dependence matters only when we attempt to recover
the probability of the conjunction from individual leaf
probabilities.
\end{remark}

\begin{remark}[Status of the refinement assumption]
\label{rem:intended-semantics}
The refinement invariant specifies the semantics that our construction
is intended to realize, rather than a property guaranteed by the
construction. Because the refinements are generated by an LLM, their
children need not always form a conjunction exactly equivalent to the
parent. Lemma~\ref{lemma:leaf_equivalence} therefore characterizes
graphs satisfying the intended semantics;
\autoref{sec:analysis-graph-quality} empirically measures departures
from this assumption.
\end{remark}

\subsubsection{Dependence-Aware Aggregation}
\label{app:proofs-dependence}

Lemma~\ref{lemma:leaf_equivalence} reduces confidence in the root
claim to confidence in a conjunction of leaf claims. Importantly,
this reduction itself requires no probabilistic independence
assumption. We first characterize the corresponding probability under
arbitrary dependence.

\begin{proposition}[Dependence-aware leaf aggregation]
\label{prop:dependent_leaf_aggregation}
Suppose $\crg$ satisfies
\autoref{eq:refinement-invariant} at every internal claim, and let
$\leafset=\{H_1,\ldots,H_m\}$. Then, for any ordering of the leaves
for which the relevant conditional probabilities are defined,
\begin{equation}
    P(G_0\mid\agenttraj)
    =
    \prod_{i=1}^{m}
    P\!\left(
        H_i
        \,\middle|\,
        \agenttraj,H_1,\ldots,H_{i-1}
    \right),
    \label{eq:dependent-chain-rule}
\end{equation}
where the first factor is $P(H_1\mid\agenttraj)$.

If only the marginal leaf probabilities
\[
    p_i=P(H_i\mid\agenttraj)
\]
are known, then without any assumption on their dependence,
\begin{equation}
    \max\!\left\{
        0,\,
        \sum_{i=1}^{m}p_i-(m-1)
    \right\}
    \;\leq\;
    P(G_0\mid\agenttraj)
    \;\leq\;
    \min_i p_i .
    \label{eq:leaf-probability-bounds}
\end{equation}
\end{proposition}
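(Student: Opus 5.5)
The plan is to reduce everything to a statement about a single finite intersection of events, using Lemma~\ref{lemma:leaf_equivalence}, and then apply two elementary probability facts: the chain rule and the Fr\'echet inequalities. Identifying each claim with the event that it is true, Lemma~\ref{lemma:leaf_equivalence} gives $G_0 \equiv \bigwedge_{i=1}^m H_i$, i.e.\ the events $G_0$ and $\bigcap_{i=1}^m H_i$ coincide for every execution under consideration. Hence they have equal conditional probability given $\agenttraj$:
\[
P(G_0\mid\agenttraj)=P\!\left(\textstyle\bigcap_{i=1}^m H_i\,\middle|\,\agenttraj\right).
\]
This step uses no independence assumption, in line with Remark~\ref{rem:overlap}.

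For the chain-rule identity \eqref{eq:dependent-chain-rule}, I would fix an ordering $H_1,\ldots,H_m$ for which every conditioning event $\agenttraj \cap H_1\cap\cdots\cap H_{i-1}$ has positive probability. I would then prove by induction on $j$ that
\[
P\!\left(\textstyle\bigcap_{i\le j} H_i\mid\agenttraj\right)=\prod_{i\le j}P(H_i\mid\agenttraj,H_1,\ldots,H_{i-1}).
\]
The base case $j=1$ is trivial. The inductive step is the definition of conditional probability, $P(A\cap B\mid \agenttraj)=P(A\mid\agenttraj)\,P(B\mid \agenttraj, A)$, applied with $A=\bigcap_{i<j}H_i$ and $B=H_j$. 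Setting $j=m$ and combining with the displayed equality gives \eqref{eq:dependent-chain-rule}.

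For the bounds \eqref{eq:leaf-probability-bounds}, I would work under the conditional measure $P(\cdot\mid\agenttraj)$.
\begin{itemize}
\item \emph{Upper bound.} By monotonicity, $\bigcap_i H_i\subseteq H_k$ for every $k$, so the probability of the intersection is at most $p_k$. Taking the minimum over $k$ gives $\min_i p_i$.
\item \emph{Lower bound.} Pass to complements and apply Boole's (union) inequality: $P(\bigcup_i H_i^c\mid\agenttraj)\le\sum_i(1-p_i)$. Hence $P(\bigcap_i H_i\mid\agenttraj)\ge 1-\sum_i(1-p_i)=\sum_i p_i-(m-1)$. Combining with nonnegativity of probability gives the $\max\{0,\cdot\}$ form.
\end{itemize}

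I do not expect a genuine technical obstacle. The only delicate point is the proviso ``for which the relevant conditional probabilities are defined''. If some partial intersection has zero probability, the chain rule must either be restricted to orderings avoiding this or read with the convention that the product is zero. In that case both sides vanish, since the full intersection is contained in the null partial intersection. I would state this convention explicitly rather than belabor it.

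I might also remark that both Fr\'echet bounds are attainable in general, which justifies the phrase ``without any assumption on their dependence''. This is not required by the proposition as stated, so I would at most sketch it: nested events attain the upper bound, and maximally disjoint complements attain the lower bound.
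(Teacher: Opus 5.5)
Your proposal is correct and follows the paper's proof essentially step for step: you use Lemma~\ref{lemma:leaf_equivalence} to identify $G_0$ with the intersection of the leaf events, apply the chain rule, get the upper bound from monotonicity, and get the lower bound from the union bound on complements together with nonnegativity. Your explicit treatment of zero-probability conditioning events and your sketch of attainability are small additions. The paper covers the first only through the proviso in the statement and omits the second.
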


\begin{proof}
By Lemma~\ref{lemma:leaf_equivalence},
\[
    G_0
    \equiv
    \bigwedge_{i=1}^{m} H_i ,
\]
so
\begin{equation}
    P(G_0\mid\agenttraj)
    =
    P\!\left(
        \bigcap_{i=1}^{m}H_i
        \,\middle|\,
        \agenttraj
    \right).
\end{equation}
Applying the probability chain rule gives
\begin{align}
    P(G_0\mid\agenttraj)
    &=
    P(H_1\mid\agenttraj)
    \prod_{i=2}^{m}
    P\!\left(
        H_i
        \,\middle|\,
        \agenttraj,H_1,\ldots,H_{i-1}
    \right),
\end{align}
establishing \autoref{eq:dependent-chain-rule} without an independence
assumption.

For the upper bound, an intersection cannot be more probable than any
of its constituent events:
\begin{equation}
    P(G_0\mid\agenttraj)
    \leq
    \min_i P(H_i\mid\agenttraj)
    =
    \min_i p_i .
\end{equation}

For the lower bound, the union bound applied to the complementary
events gives
\begin{align}
    P\!\left(
        \bigcup_i \neg H_i
        \,\middle|\,
        \agenttraj
    \right)
    &\leq
    \sum_i P(\neg H_i\mid\agenttraj) \\
    &=
    \sum_i(1-p_i).
\end{align}
Therefore,
\begin{align}
    P(G_0\mid\agenttraj)
    &=
    1-
    P\!\left(
        \bigcup_i \neg H_i
        \,\middle|\,
        \agenttraj
    \right) \\
    &\geq
    1-\sum_i(1-p_i) \\
    &=
    \sum_i p_i-(m-1).
\end{align}
Combining this with non-negativity yields the lower bound in
\autoref{eq:leaf-probability-bounds}.
\end{proof}

Proposition~\ref{prop:dependent_leaf_aggregation} makes explicit what
can and cannot be inferred from leaf-level confidence alone. Under the refinement invariant, the root probability is the joint probability that all leaf claims hold, but the marginal probabilities $P(H_i\mid\agenttraj)$ generally do
not determine this joint probability. Without additional information
about dependence, they determine only the interval in
\autoref{eq:leaf-probability-bounds}.

\subsubsection{Product Aggregation}
\label{app:proofs-product}

The product rule used by our method follows immediately when the
dependence structure is approximated by conditional independence.

\begin{corollary}[Product rule under local conditional independence]
Suppose the refinement invariant above holds at every internal claim, and
suppose that for every internal claim $G \in V \setminus L$, the claims in
$\mathrm{ch}(G)$ are mutually independent conditional on $\tau$. 
Then
\[
P(G_0 \mid \tau)
=
\prod_{H \in L} P(H \mid \tau).
\]
\end{corollary}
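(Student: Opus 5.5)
The plan is to prove a stronger, subtree-local statement by the same structural induction used for Lemma~\ref{lemma:leaf_equivalence}: for every claim $G\in\claimset$,
\[
P(G\mid\agenttraj)=\prod_{H\in\leafset(G)}P(H\mid\agenttraj),
\]
and then specialize to $G=G_0$, using $\leafset(G_0)=\leafset$. Induction is well-founded because the refinement structure is a finite tree of depth at most $k$. We identify each claim with the event that it is true, as in the preamble of \autoref{app:detailed_proof}. The refinement invariant \autoref{eq:refinement-invariant} then says that the event $G$ coincides with the intersection of the events in $\childfn(G)$, so the two have the same conditional probability given $\agenttraj$.

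The base case is a leaf $G\in\leafset$. There $\leafset(G)=\{G\}$ and the identity is trivial. For the inductive step, take an internal claim $G$ with $\childfn(G)=\{G_1,\ldots,G_m\}$. The invariant gives
\[
P(G\mid\agenttraj)=P\Bigl(\textstyle\bigcap_{j}G_j\Bigm|\agenttraj\Bigr).
\]
The local hypothesis that the children of $G$ are mutually independent conditional on $\agenttraj$ turns this into $\prod_{j}P(G_j\mid\agenttraj)$. Applying the inductive hypothesis to each $G_j$ gives
\[
P(G\mid\agenttraj)=\prod_{j}\prod_{H\in\leafset(G_j)}P(H\mid\agenttraj).
\]
Because the subtree is a tree, the sets $\leafset(G_j)$ partition $\leafset(G)$, so this double product equals $\prod_{H\in\leafset(G)}P(H\mid\agenttraj)$. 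Particularization is the case $m=1$: there independence is vacuous and the step is just the equality $P(G\mid\agenttraj)=P(G'\mid\agenttraj)$. The proof uses the refinement invariant only through Lemma~\ref{lemma:leaf_equivalence}-style substitution. It can also be viewed as exhibiting a particular ordering in the chain rule \autoref{eq:dependent-chain-rule} of Proposition~\ref{prop:dependent_leaf_aggregation} in which the conditional factors collapse to marginals.

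The step needing the most care is that independence is assumed only locally, among siblings, and never globally across all leaves. A naive route would apply \autoref{eq:dependent-chain-rule} to the full leaf set and then try to drop the conditioning on earlier leaves. That would require independence between leaves in different subtrees, which the hypothesis does not grant directly. The recursive argument avoids this: at each node it factors only over that node's children, treating each child as a single event. Independence across subtrees is never invoked explicitly. The needed multiplicativity is inherited from the sibling-level independence at their lowest common ancestor, combined with the event equalities supplied by the invariant.

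One further point to check is the partition claim. Each leaf lies below exactly one child of $G$, so no leaf probability is counted twice. This holds because the refinement structure is a tree, as established in the proof of Lemma~\ref{lemma:leaf_equivalence}. If two leaves happened to express logically identical conditions (Remark~\ref{rem:overlap}), the local independence hypothesis would already be false at their common ancestor, so the corollary remains consistent.
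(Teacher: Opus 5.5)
Your proof is correct and follows the paper's own argument. It is a structural induction on $P(G\mid\tau)=\prod_{H\in\mathcal{L}(G)}P(H\mid\tau)$: the invariant identifies $G$ with the intersection of its children, sibling independence factors that probability, the induction hypothesis expands each child, and the children's leaf sets together make up $\mathcal{L}(G)$, after which you specialize to $G_0$.

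Two of your side remarks are inaccurate, though the proof never uses them.

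\emph{Chain-rule remark.} Local independence does not in general make each individual chain-rule factor collapse to its marginal under some ordering. In a depth-first ordering, only the block of factors for each child subtree collapses as a unit.

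\emph{Identical-leaves remark.} Logically identical leaves in different subtrees need not violate independence at their common ancestor. Take $G_1\equiv H\wedge X$ and $G_2\equiv H'\wedge Y$ with $H'\equiv H$. All three local independence conditions can then hold with $0<P(H\mid\tau)<1$. In that case the product, including its factor $P(H\mid\tau)^2$, is still exactly right, as your own induction shows.
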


\begin{proof}
We proceed by structural induction. For a leaf $G$, the result is immediate.
For an internal claim $G$ with children $\mathrm{ch}(G)$, the refinement
invariant and conditional independence give
\[
P(G \mid \tau)
=
\prod_{G' \in \mathrm{ch}(G)} P(G' \mid \tau).
\]
Applying the induction hypothesis to each child and noting that their
subtree leaves together form $L(G)$ gives
\[
P(G \mid \tau)
=
\prod_{H \in L(G)} P(H \mid \tau).
\]
Taking $G=G_0$ yields the result.
\end{proof}

\paragraph{Implications for our estimator.}
Our estimator does not observe the true marginal probabilities.
Instead, it obtains a verbalized confidence
\begin{equation}
    c_H=\leafconf(H)
    \approx
    P(H\mid\agenttraj)
\end{equation}
for each leaf and uses the plug-in estimator
\begin{equation}
    \confestimator(\agenttraj)
    =
    \prod_{H\in\leafset} c_H .
    \label{eq:plugin-product}
\end{equation}
Thus, product aggregation should be understood as a tractable
conditional-independence approximation to the joint probability in
Proposition~\ref{prop:dependent_leaf_aggregation}, rather than as a
claim that the leaf conditions produced by the CRG are exactly
independent.

If the $c_H$ were exact marginal probabilities, but no dependence
assumption were made, the root probability would only be identified
within
\begin{equation}
    \max\!\left\{
        0,\,
        \sum_{H\in\leafset}c_H-(|\leafset|-1)
    \right\}
    \leq
    P(G_0\mid\agenttraj)
    \leq
    \min_{H\in\leafset}c_H .
    \label{eq:estimated-leaf-bounds}
\end{equation}
Because the $c_H$ are themselves estimates,
\autoref{eq:estimated-leaf-bounds} is only a plug-in characterization,
not a guaranteed confidence interval.

Dependence provides one explanation for systematic error in the
product approximation. When different leaves encode overlapping or
positively associated conditions, multiplying their marginal
confidences can count related uncertainty multiple times and
underestimate the joint probability. Other dependence structures can
instead cause overestimation. Our graph-quality analysis therefore uses non-redundancy among siblings as a diagnostic for obvious violations of the independence approximation. Non-redundancy does not establish independence, but substantial overlap indicates that product aggregation may double-count uncertainty. Non-redundancy does not establish independence, but
strong overlap is a direct warning sign that product aggregation may
double-count uncertainty.

A fully dependence-aware estimator could instead estimate the
sequential conditionals appearing in
\autoref{eq:dependent-chain-rule},
\begin{equation}
    \tilde c_i
    \approx
    P\!\left(
        H_i
        \,\middle|\,
        \agenttraj,H_1,\ldots,H_{i-1}
    \right),
\end{equation}
and return
\begin{equation}
    F_{\theta,\mathrm{dep}}(\agenttraj)
    =
    \prod_{i=1}^{m}\tilde c_i .
\end{equation}
With exact conditional probabilities, the chain-rule identity is
independent of the ordering of the leaves. In practice, however,
estimating these higher-order conditionals would require more complex
queries, and approximation error could make the resulting estimate
sensitive to the chosen ordering. We therefore use marginal leaf
scoring with product aggregation as a simple training-free
approximation and evaluate its empirical consequences through graph
quality and aggregation-rule ablations.

\begin{remark}[Bottom-up computation]
\label{rem:bottom-up}
Because multiplication is associative, the product in
\autoref{eq:plugin-product} can equivalently be computed bottom-up,
taking the value of each internal claim to be the product of its
children's values. The refinement-tree structure therefore determines
which leaf claims are evaluated and how they were derived from the
root, but not the numerical value of the final product once the leaf
confidences are fixed. This equivalence is specific to associative
aggregation; for example, recursively applying an arithmetic or
geometric mean can differ from applying the same operator once to all
leaves.
\end{remark}

\begin{remark}[Conservatism relative to leaf estimates]
\label{rem:monotone}
Since every $c_H\in[0,1]$,
\begin{equation}
    \confestimator(\agenttraj)
    \leq
    \min_{H\in\leafset} c_H .
\end{equation}
Thus, under product aggregation, one low-confidence leaf cannot be
offset by high-confidence values on other leaves, and introducing
additional leaves weakly decreases the estimate if existing
confidences are held fixed. This is an algebraic property of the
estimator; when conditional independence fails, it does not imply that
the product is necessarily a lower bound on the true root probability.
\end{remark}

\subsection{Paired Bootstrap Analysis}
\label{app:paired-bootstrap}

We check whether CRG's gains in Table~\ref{tab:main-results} persist under task resampling. For each benchmark--baseline--metric comparison, we create 10,000 bootstrap resamples by drawing the benchmark's original number of task IDs with replacement and retaining their available OpenHands trajectories. We compute the metric for Qwen CRG and the baseline on the same resample (paired bootstrap; \citealp{koehn-2004-statistical}). This uses saved predictions without new inference. \autoref{tab:paired-bootstrap} shows signed differences; asterisks mark nominal 95\% bootstrap intervals that exclude zero.

\begin{table*}[t]
\centering
\caption{Paired task-bootstrap comparison of CRG (Qwen-3.8 27B) with every baseline in \autoref{tab:main-results}. Entries are signed score differences: baseline minus CRG for ECE and Brier, and CRG minus baseline for AUROC and BAS. Positive values favor CRG. Exploratory mean rows average the six signed differences above them, with intervals recomputed on common task resamples. $^{*}$ marks a difference whose nominal 95\% bootstrap interval excludes zero, indicating statistical significance. No multiple-comparison adjustment is applied.}
\label{tab:paired-bootstrap}
\setlength{\tabcolsep}{5pt}
\resizebox{\textwidth}{!}{%
\begin{tabular}{llcccc}
\toprule
Benchmark & Baseline & ECE & Brier & AUROC & BAS \\
\midrule
SWE-Bench Verified & Basic Verbalizer (GPT) & $+0.115^{*}$ & $+0.034^{*}$ & $+0.006$ & $+0.411^{*}$ \\
 & Basic Verbalizer (Qwen) & $+0.095^{*}$ & $+0.018$ & $-0.084^{*}$ & $+0.101^{*}$ \\
 & Reason-as-Graph (GPT) & $+0.081^{*}$ & $+0.019$ & $-0.002$ & $+0.318^{*}$ \\
 & Reason-as-Graph (Qwen) & $+0.092^{*}$ & $+0.017$ & $-0.088^{*}$ & $+0.102^{*}$ \\
 & Verbal Consistency & $+0.154^{*}$ & $+0.052^{*}$ & $+0.030$ & $+7.959^{*}$ \\
 & Surrogate LNSP & $+0.009$ & $+0.004$ & $+0.098^{*}$ & $+0.011$ \\
\cmidrule(lr){2-6}
 & \textbf{Mean of six baselines} & $+0.091^{*}$ & $+0.024^{*}$ & $-0.007$ & $+1.484^{*}$ \\
\midrule
EnterpriseOps-Gym & Basic Verbalizer (GPT) & $+0.454^{*}$ & $+0.330^{*}$ & $+0.125^{*}$ & $+2.562^{*}$ \\
 & Basic Verbalizer (Qwen) & $+0.365^{*}$ & $+0.228^{*}$ & $+0.019$ & $+0.652^{*}$ \\
 & Reason-as-Graph (GPT) & $+0.442^{*}$ & $+0.320^{*}$ & $+0.150^{*}$ & $+2.771^{*}$ \\
 & Reason-as-Graph (Qwen) & $+0.331^{*}$ & $+0.199^{*}$ & $-0.002$ & $+0.577^{*}$ \\
 & Verbal Consistency & $+0.321^{*}$ & $+0.204^{*}$ & $+0.004$ & $+13.805^{*}$ \\
 & Surrogate LNSP & $+0.096^{*}$ & $+0.034^{*}$ & $+0.010$ & $+0.042^{*}$ \\
\cmidrule(lr){2-6}
 & \textbf{Mean of six baselines} & $+0.335^{*}$ & $+0.219^{*}$ & $+0.051^{*}$ & $+3.402^{*}$ \\
\midrule
SkillsBench & Basic Verbalizer (GPT) & $+0.344^{*}$ & $+0.217^{*}$ & $+0.110$ & $+2.312^{*}$ \\
 & Basic Verbalizer (Qwen) & $+0.299^{*}$ & $+0.166^{*}$ & $-0.002$ & $+0.474^{*}$ \\
 & Reason-as-Graph (GPT) & $+0.315^{*}$ & $+0.200^{*}$ & $+0.140^{*}$ & $+2.233^{*}$ \\
 & Reason-as-Graph (Qwen) & $+0.268^{*}$ & $+0.136^{*}$ & $+0.018$ & $+0.545^{*}$ \\
 & Verbal Consistency & $+0.332^{*}$ & $+0.197^{*}$ & $+0.043$ & $+14.012^{*}$ \\
 & Surrogate LNSP & $+0.051$ & $+0.042^{*}$ & $+0.192^{*}$ & $+0.059^{*}$ \\
\cmidrule(lr){2-6}
 & \textbf{Mean of six baselines} & $+0.268^{*}$ & $+0.160^{*}$ & $+0.084^{*}$ & $+3.273^{*}$ \\
\bottomrule
\end{tabular}%
}
\end{table*}

Qwen CRG has lower adaptive ECE in all 18 comparisons, and the nominal 95\% intervals exclude zero in 16: every verbalized baseline on every benchmark, plus Surrogate LNSP on EnterpriseOps-Gym. Its smaller ECE gains over Surrogate LNSP on SWE-Bench Verified and SkillsBench remain uncertain. Brier and BAS gains are supported in 14 of 18 and 17 of 18 comparisons, respectively. AUROC includes the SWE-Bench Verified losses to two Qwen verbalizers already visible in Table~\ref{tab:main-results}. Thus, task resampling supports the principal calibration and decision-utility gains, but does not establish that every per-benchmark difference is robust. As an exploratory summary, the mean of the six paired differences has a nominal 95\% interval above zero for ECE, Brier, and BAS on all three benchmarks, and for AUROC on EnterpriseOps-Gym and SkillsBench.%

Overall, the paired bootstrap shows that \textbf{CRG's main calibration and decision-utility gains persist under test-task resampling.}%

\subsection{Software Versions and Randomization}
\label{app:reproducibility-details}

SkillsBench and EnterpriseOps-Gym trajectories used OpenHands SDK v1.31.0.
For SWE-Bench Verified, trajectories are source from the published runs on \href{https://index.openhands.dev/}{https://index.openhands.dev/}. Therefore, the precise OpenHands SDK versions vary by model: v1.18.1 with
GPT-5.5, v1.28.0 with Gemini-3.5 Flash, and v1.24.0 with MiniMax-M3.
The Codex and Claude Code transfer runs used the
\texttt{codex-acp} v0.11.1 and \texttt{claude-agent-acp} v0.30.0
adapters, respectively, with OpenHands SDK v1.19.1 and v1.18.0.

The 306 SWE-Bench Verified tasks were selected by difficulty without
random sampling. For EnterpriseOps-Gym, we independently shuffled the
tasks in each of eight domains with seed 42.
Confidence-estimation runs shuffled their fixed datasets
with seed 42; this affected processing order, not task membership.
The paired task bootstrap used 10,000 resamples per comparison and
We did not set an
explicit inference-time seed for agent or estimator LLM calls.

\end{document}